\newcommand{\bmtx}{\begin{bmatrix}}
\newcommand{\emtx}{\end{bmatrix}}
\newcommand{\bsmtx}{\left[ \begin{smallmatrix}} 
\newcommand{\esmtx}{\end{smallmatrix} \right]}
\newcommand{\field}[1]{\mathbb{#1}}
\newcommand{\R}{\field{R}}
\newcommand{\Sym}{\field{S}}
\newcommand{\N}{\field{N}}
\newcommand{\Dpm}{D_{\pm 1}}
\newcommand{\Mset}[1]{\mathcal{M}_{#1}}
\newcommand{\Mc}{\mathcal{M}_{c}}
\newcommand{\Miset}[1]{\mathcal{M}^{inc}_{#1}}
\newcommand{\Mic}{\mathcal{M}_c^{inc}}
\newcommand{\fflip}[1]{{#1}^{\mbox{\footnotesize flip}}}
\newtheorem{theorem}{Theorem}
\newtheorem{corollary}{Corollary}
\newtheorem{lemma}{Lemma}
\newtheorem{definition}{Definition}
\newtheorem{proposition}{Proposition}
\newtheorem{repeatlem@}{Lemma}
\newenvironment{repeatlem}[1]{%
    \def\therepeatlem@{\ref{#1}}
    \repeatlem@
}
{\endrepeatlem@}
\title{A Complete Set of Quadratic Constraints for 
\\
Repeated ReLU and Generalizations}
\author{Sahel Vahedi Noori, Bin Hu, Geir Dullerud, and Peter Seiler
	\thanks{S. Vahedi Noori and P. Seiler are with the Department of Electrical Engineering \& Computer Science, at the University of Michigan ({\tt\small sahelvn@umich.edu;} and
		{\tt\small pseiler@umich.edu}). B. Hu is with the Department of Electrical and Computer Engineering at the University of Illinois at Urbana-Champaign ({\tt \small binhu7@illinois.edu}). G. Dullerud is with the
  the Department of Mechanical Science and Engineering at the University of Illinois at Urbana-Champaign ({\tt \small dullerud@illinois.edu})
  }
	}
\begin{document}

\maketitle

\begin{abstract}
This paper derives a complete set of quadratic constraints (QCs) for the  repeated ReLU.  The complete set of QCs is described by a collection of matrix copositivity conditions. We also show that only two functions satisfy all QCs in our complete set:  the repeated  ReLU and flipped ReLU.   Thus our complete set of QCs bounds the  repeated ReLU as tight as possible up to the sign invariance inherent in quadratic forms. We derive a similar complete set of incremental QCs for repeated  ReLU, which can potentially lead to less conservative Lipschitz bounds for ReLU networks than the standard LipSDP approach. 
The basic constructions are also used to derive the complete sets of QCs for other piecewise linear activation functions such as leaky ReLU, MaxMin, and HouseHolder. Finally, we illustrate the use of the complete set of QCs to  assess stability and performance for recurrent neural networks with ReLU activation functions. We rely on a standard  copositivity relaxation to formulate the stability/performance condition as a semidefinite program. Simple examples are provided to illustrate that the complete sets of QCs and incremental QCs can yield less conservative bounds than existing sets. 



\end{abstract}


\section{Introduction}

This paper considers the use of quadratic constraints (QCs) to characterize a nonlinear function. QCs are inequalities expressed as quadratic forms on the input/output graph of the function.  These constraints are useful as they can be easily incorporated into stability and performance conditions for dynamical systems.  This is a special case of the more general integral quadratic constraint (IQC) framework \cite{megretski97,veenman16,scherer22,seiler15}.  Moreover, a related class of incremental QCs can be used to compute Lipschitz bounds on static functions \cite{fazlyab2019efficient,wang2022quantitative,revay2020lipschitz,araujo2023unified,wang2023direct,havens2024exploiting,fazlyab2024certified,wang2024scalability,pauli2024novel,pauli2023lipschitz,pauli2024lipschitz}.

This paper focuses on the specific nonlinearity known as the  Rectified Linear Unit (ReLU), and its characterization in terms of QCs and incremental QCs. This special case is motivated by the popular use of the  ReLU as the activation function in neural networks (NNs) and recurrent neural networks (RNNs) \cite{nair2010rectified,krizhevsky2012imagenet,chung2014empirical}. The scalar ReLU $\phi:\R\to\R$ 
and the more general (repeated) ReLU $\Phi:\R^{n_v}\to \R^{n_v}$
are defined formally in Section~\ref{sec:bgReLUQCs}.
The scalar ReLU satisfies a number of useful properties including positivity
($\phi(v)\ge 0$), positive complement ($\phi(v)\ge v$), complementarity ($\phi(v)( v-\phi(v))=0$), and slope restrictions.   These properties have been previously leveraged to derive several useful QCs for the repeated ReLU $\Phi$~\cite{richardson23,drummond24,fazlyab2020safety,ebihara21EJC,ebihara21CDC}. 

QCs can be combined with Lyapunov/dissipativity theory 
\cite{willems72a,willems72b,schaft99,khalil01} 
to derive stability and performance conditions for RNNs.
Related work along these lines for discrete-time RNNs is given in
\cite{soykens99,tan2024robust,yin2021stability,richardson23,ebihara21EJC,ebihara21CDC,noori24ReLURNN}. In addition, incremental QCs have been used to compute Lipschitz bounds on NNs 
\cite{fazlyab2019efficient}.  There is a rapidly growing literature on the application of incremental QCs including extensions for $\ell_\infty$ perturbations
\cite{wang2022quantitative}, deep equilibrium models \cite{revay2020lipschitz}, and the use for designing Lipschitz networks \cite{araujo2023unified,wang2023direct,havens2024exploiting,fazlyab2024certified}.  There is also recent work on scalability to ImageNet
\cite{wang2024scalability} as well as extensions to convolutional structures \cite{pauli2024novel,pauli2023lipschitz} and 
activation functions such as MaxMin and GroupSort
\cite{pauli2024lipschitz}.

The following summarizes the key contributions of our paper to this existing literature:
\begin{enumerate}
\item  We provide the complete class of QCs satisfied by the repeated ReLU ($\Mc$ in Theorem~\ref{thm:ReLUQCs1}). This complete set of QCs  is described by a collection of $2^{n_v}$ matrix copositivity conditions where $n_v$ is the dimension of the repeated ReLU.  Importantly,  we also show that only two functions satisfy all QCs in our complete set:  the repeated ReLU and flipped ReLU 
(Theorem~\ref{thm:ReLUQCs2}). Thus our complete set of QCs bounds the repeated ReLU as tight as possible up to the sign invariance inherent in quadratic forms.  
\item We derive the complete set of incremental QCs for repeated ReLU ($\Mic$ in Theorem~\ref{thm:ReLUIncQCs}). We further demonstrate the utility of this result by using it to derive a new subclass of incremental QCs for repeated ReLU ($\Miset{2}$ in Lemma~\ref{thm:IncQC}).
\item We show in Section~\ref{sec:relatedresults} how to adapt the results to derive the complete class of QCs for other important piecewise linear activation functions including the leaky ReLU, MaxMin, and Householder activations.
\item Finally, Section~\ref{sec:QCstab} applies the complete set  of QCs for  analysis of RNNs with ReLU activation functions.  A stability and performance condition is derived using standard Lyapunov/dissipativity arguments. There are computational issues associated with the use of the complete set due to the copositivity conditions.  Thus we rely on an existing copositivity relaxation to formulate our stability/performance condition as a semidefinite program (SDP). 
\end{enumerate}

Two simple examples illustrating the approach are given in Section~\ref{sec:Examples}. The first example demonstrates that the complete class of QCs $\Mc$ can provide less conservative bounds on the induced $\ell_2$ gain of an RNN.  The second example demonstrates that the the complete class of incremental QCs $\Mic$ can provide less conservative Lipschitz bounds on a simple NN. Both examples are academic in nature but demonstrate that improved bounds are possible beyond the typical sets of QCs and incremental QCs used in the literature for repeated ReLU.

\section{Background}
\label{sec:bg}

\subsection{Notation}
\label{sec:notation}

This section introduces basic notation regarding vectors, matrices, and signals.  Let $\R^n$ and $\R^{n\times m}$ denote the sets of real $n\times 1$ vectors and $n\times m$ matrices, respectively. Moreover, $\R_{\ge 0}^n$ and $\R_{\ge 0}^{n\times m}$ denote vectors and matrices of the given dimensions with non-negative entries. Thus $\R_{ \ge 0}^n$ corresponds to the non-negative orthant. We let
$e_i \in \R^{n_v}$ denote the $i^{th}$ standard basis vector; every entry of $e_i$ is zero except entry $i$ is equal to 1.

Next, define the following sets of matrices:
\begin{itemize}
\item $\Sym^n$ is the set of real symmetric, $n\times n$ matrices.
\item $D^n$ is the set of $n\times n$, real diagonal matrices.
\item $COP^n$ is the set of $n\times n$, real, symmetric co-positive matrices. Specifically, if $M=M^\top\in COP^n$ then
$x^\top M x \ge 0$ for all $x\in \R^{n}_{\ge 0}$.
\item $DH^n$ is the set of $n\times n$, real, doubly hyperdominant matrices. Such matrices have non-positive off-diagonal elements while both the row sums and column sums are non-negative.
\item $\N$ is the set of non-negative integers.  
\end{itemize}
For $M\in\Sym^n$, we use $M\succeq 0$ and $M\succ 0$  to denote that it is positive semidefinite or positive definite, respectively.

Finally, let $v:\N \to \R^n$ and $w:\N \to \R^n$ be real, vector-valued
sequences.  Define the inner product $\langle v,w \rangle : = \sum_{k=0}^\infty v(k)^\top w(k)$.  The
set $\ell_2^n$ is an inner product space with sequences $v$ that satisfy $\langle v,v\rangle < \infty$. The corresponding norm is
$\|v\|_2 := \sqrt{ \langle v,v \rangle}$. 

\subsection{QCs for ReLU}
\label{sec:bgReLUQCs}

The scalar Rectified Linear Unit (ReLU) is a function $\phi:\R \to \R_{\ge 0}$
defined by:
\begin{align}
\label{eq:ReLU}
\phi(v)= \left\{
  \begin{array}{ll}
    0 & \mbox{if } v < 0 \\
    v & \mbox{if } v \geq 0 
  \end{array} 
  \right. .
\end{align}
The graph of the scalar ReLU is shown in Figure~\ref{fig:ScalarReLU}.
The repeated ReLU is the function   $\Phi:\R^{n_v} \to \R_{\ge 0}^{n_v}$  defined by
$\Phi(v)= (\phi(v_1), \, \phi(v_2), \ldots , \, \phi(v_{n_v}))$.  In the remainder of the paper we will use
"ReLU" to refer to the repeated ReLU and the term "scalar ReLU" when we want to emphasize the case with a single input and output.

\begin{figure}[h!t]
\centering
\begin{picture}(100,90)(170,25)
 \thicklines
 \put(170,45){\vector(1,0){100}}  
 \put(260,50){$v$}
 \put(220,25){\vector(0,1){70}}  
 \put(205,100){$w=\phi(v)$}
 \put(170,46){{\color{red} \line(1,0){50}} }  
 \put(220,46){{\color{red} \line(1,1){35}} }  
\end{picture}
\caption{Graph of scalar ReLU $\phi$.}
\label{fig:ScalarReLU}
\end{figure}
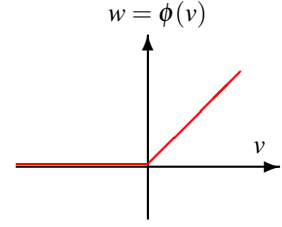

This paper mainly focuses on QCs satisfied by the ReLU. The notion of a QC is formally defined next.

\vspace{0.1in}
\begin{definition}
A function $F:\R^{n_v} \to \R^{n_v}$ satisfies the \underline{Quadratic Constraint (QC)} defined by $M\in \Sym^{2n_v}$ if the
following inequality holds for all $v\in \R^{n_v}$:
\begin{align}
\label{eq:QCdef}
\bmtx v \\ F(v)\emtx^\top 
M
\bmtx v \\ F(v)\emtx \ge 0 .
\end{align}
We denote by $QC(M)$ the set of all functions which satisfy the QC defined by $M\in \Sym^{2n_v}$.  Furthermore, given a subset of matrices $\mathcal{M}\subset\Sym^{2n_v}$ we  define
$QC(\mathcal{M}):= \cap_{M\in\mathcal{M}} QC(M)$; namely, $F\in QC(\mathcal{M})$ means $F$ satisfies every QC defined by the elements of $\mathcal{M}$. 
\end{definition}
\vspace{0.1in}

QCs are, in general, conservative bounds on the graph of a function in the sense that both $QC(M)$ and  $QC(\mathcal{M})$ will typically contain more than one function. However, these bounds are useful as they can be easily incorporated into stability and performance conditions for dynamical systems.  Section~\ref{sec:QCstab} will provide one such application of QCs for stability and performance analysis.

QCs for ReLU can be derived using simple properties of the scalar ReLU.  Specifically, the
scalar ReLU satisfies the following useful properties that have been used previously in the literature \cite{richardson23,drummond24,fazlyab2020safety,ebihara21EJC,ebihara21CDC}:
\begin{enumerate}
\item \emph{Positivity:} The scalar ReLU is non-negative for all inputs: $\phi(v)\ge 0$ $\forall v\in \R$.

\item \emph{Positive Complement:} The scalar ReLU satisfies $\phi(v)\ge v$ $\forall v\in \R$.

\item \emph{Complementarity:} The graph of the scalar ReLU is identically on the line of slope 0 (when $v\le 0$) or the line of slope 1 (when $v\ge 0$).  Thus, $\phi(v) \, (v-\phi(v))=0$ $\forall v\in\R$.

\item \emph{Positive Homogeneity:} The scalar ReLU is homogeneous for all non-negative constants: $\phi(\beta v)= \beta \phi(v)$ $\forall v\in \R$ and $\forall \beta \in \R_{\ge 0}$.
\end{enumerate}
We can express equivalent properties for ReLU, e.g. the positivity and positive complement properties are $\Phi(v) \ge 0$ and $\Phi(v)\ge v$ for all $v\in \R^{n_v}$, respectively.  
The next lemma gives two QCs for the  ReLU based on these properties. Both QCs are known in the literature and the second one, defined by $\Mset{2}$  below, is a restatement of Theorem 2 in \cite{ebihara21CDC}.

\vspace{0.1in}
\begin{lemma}
    Define the following subsets of $\Sym^{2n_v}$:
    \begin{align}
    \label{eq:M1}
    \Mset{1} & := \left\{ \bmtx 0 & Q_1 \\ Q_1 & -2Q_1 \emtx \, : \, Q_1 \in D^{n_v} \right\},  \\
    \label{eq:M2}
    \Mset{2} & := \left\{ \bmtx -I & I \\ 0 & I \emtx^\top  Q_2 \bmtx -I & I \\ 0 & I \emtx
     \, : \,  Q_2 \in COP^{2n_v} \right\}.
    \end{align}
    The  ReLU $\Phi:\R^{n_v} \to \R_{\ge 0}^{n_v}$
    satisfies the QCs defined by any matrices in $\Mset{1}$ and $\Mset{2}$, i.e.
    $\Phi\in QC(\Mset{1})$ and $\Phi\in QC(\Mset{2})$.
\end{lemma}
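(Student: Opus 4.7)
The plan is to verify each QC directly by expanding the quadratic form and applying the appropriate elementary property of the scalar ReLU componentwise.

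For membership in $QC(\Mset{1})$, I would expand the quadratic form for a generic $M = \bsmtx 0 & Q_1 \\ Q_1 & -2Q_1 \esmtx$ with $Q_1 \in D^{n_v}$, giving
\begin{align*}
\bmtx v \\ \Phi(v)\emtx^\top M \bmtx v \\ \Phi(v)\emtx = 2\,\Phi(v)^\top Q_1 \bigl(v-\Phi(v)\bigr).
\end{align*}
Since $Q_1$ is diagonal this is $2\sum_{i=1}^{n_v} (Q_1)_{ii}\,\phi(v_i)\bigl(v_i-\phi(v_i)\bigr)$, and the complementarity property $\phi(v_i)(v_i-\phi(v_i))=0$ forces every summand (hence the sum) to vanish. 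Thus the inequality holds with equality for every $Q_1 \in D^{n_v}$, and the sign of the diagonal entries is irrelevant.

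For membership in $QC(\Mset{2})$, the key observation is that the outer factor is engineered so that the quadratic form rewrites in terms of a vector that is componentwise non-negative. Specifically, for $M = T^\top Q_2 T$ with $T := \bsmtx -I & I \\ 0 & I \esmtx$ and $Q_2 \in COP^{2n_v}$, I would compute
\begin{align*}
T\bmtx v \\ \Phi(v)\emtx = \bmtx \Phi(v)-v \\ \Phi(v) \emtx.
\end{align*}
By the positive complement property $\Phi(v)\ge v$ and the positivity property $\Phi(v)\ge 0$, both blocks of this vector lie in $\R^{n_v}_{\ge 0}$, so the stacked vector lies in $\R^{2n_v}_{\ge 0}$. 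The definition of copositivity then yields
\begin{align*}
\bmtx v \\ \Phi(v)\emtx^\top M \bmtx v \\ \Phi(v)\emtx = \bmtx \Phi(v)-v \\ \Phi(v)\emtx^\top Q_2 \bmtx \Phi(v)-v \\ \Phi(v)\emtx \ge 0.
\end{align*}

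Neither part requires a substantive obstacle to overcome; the work is essentially bookkeeping. The one point worth highlighting in the write-up is the design motivation behind $\Mset{2}$: the matrix $T$ is precisely the linear map that converts the pair $(v,\Phi(v))$ into the non-negative pair $(\Phi(v)-v,\Phi(v))$, which is exactly the setting where a copositive $Q_2$ certifies non-negativity of the quadratic form. This emphasizes how the structural properties (complementarity for $\Mset{1}$; positivity and positive complement for $\Mset{2}$) are encoded into the matrix templates.
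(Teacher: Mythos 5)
Your proof is correct and follows essentially the same route as the paper: expand the $\Mset{1}$ quadratic form to $2\Phi(v)^\top Q_1(v-\Phi(v))$ and kill it with complementarity, then rewrite the $\Mset{2}$ form as a copositive form evaluated on the elementwise non-negative vector $\bigl(\Phi(v)-v,\ \Phi(v)\bigr)$. No gaps.
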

\vspace{0.05in}
\begin{proof}
Take any $M_1\in \Mset{1}$ and then observe that
for all $v\in \R^{n_v}$ we algebraically have
\begin{align*}
\bmtx v \\ \Phi(v)\emtx^\top 
M_1 
\bmtx v \\ \Phi(v)\emtx = 
    2\sum_{i=1}^{n_v} (Q_1)_{ii} \phi (v_i) (\phi (v_i)-v_i).
\end{align*}
\noindent 
By complementarity of scalar ReLU the right-hand side is zero and thus $\Phi$ satisfies the QC defined by  any
$M_1 \in \Mset{1}$.

Next, choose any $M_2\in \Mset{2}$ and verify that for any $v\in \R^{n_v}$ we have
\begin{align*}
\bmtx v \\ \Phi(v) \emtx^\top 
M_2 
\bmtx v \\ \Phi(v) \emtx = 
\bmtx \Phi(v)-v \\ \Phi(v) \emtx^\top
Q_2
\bmtx \Phi(v)-v \\ \Phi(v) \emtx.
\end{align*}
The right-hand side is non-negative  because $Q_2$ is copositive and $\bsmtx \Phi(v)-v \\ \Phi(v) \esmtx$ is elementwise non-negative by the positivity and positive complement properties of $\phi$.
Thus $\Phi$ satisfies the QC defined by any
$M_2\in \Mset{2}$.
\end{proof}
\vspace{0.1in}

ReLU also satisfies QCs defined for the more
general class of slope restricted nonlinearities. Specifically, scalar ReLU is slope restricted to $[0,1]$, i.e.:
\begin{align}
\label{eq:ReLUslope}
  0 \le  \frac{ \phi(v_2)-\phi(v_1)}{v_2-v_1} \le 1, 
  \,\, \forall v_1, \, v_2 \in \R, \, v_1\ne v_2.
\end{align}
The next lemma defines a set of QCs for  ReLU based on this slope-restriction.  These are a special case of Zames-Falb multipliers \cite{Carrasco:2016}.
The lemma follows from existing results for slope restricted nonlinearities
(Section 3.5 of \cite{Willems:71} and
\cite{kulkarni02}\footnote{The results in these references are stated for   monotone nonlinearities. A similar fact holds for nonlinearities with slope restricted to $[0,1]$ by a transformation of the input-output data.}).

\vspace{0.1in}
\begin{lemma}
    \label{lem:ZamesFalb}
    Define the following subset of $\Sym^{2n_v}$:
    \begin{align}
    \label{eq:M3}
    \Mset{3}:=\left\{ \bmtx 0 & Q_3^\top \\ Q_3 & -(Q_3+Q_3^\top) \emtx
    \, : \, Q_3 \in DH^{n_v} \right\}.
    \end{align}
    The  ReLU $\Phi:\R^{n_v} \to \R_{\ge 0}^{n_v}$
    satisfies the QCs defined by any matrix in $\Mset{3}$, i.e.
    $\Phi\in QC(\Mset{3})$.
\end{lemma}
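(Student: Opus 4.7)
The plan is to verify the QC directly by expanding the quadratic form and exploiting the structural properties of the scalar ReLU already listed in Section~\ref{sec:bgReLUQCs}. First, I would simplify the quadratic form. For any $Q_3 \in DH^{n_v}$, write $w:=\Phi(v)$ and expand
\[
\bmtx v \\ w \emtx^\top \bmtx 0 & Q_3^\top \\ Q_3 & -(Q_3+Q_3^\top) \emtx \bmtx v \\ w \emtx = 2 w^\top Q_3 v - 2 w^\top Q_3 w = 2 w^\top Q_3 (v-w),
\]
using that $v^\top Q_3^\top w = w^\top Q_3 v$ is a scalar and $w^\top(Q_3+Q_3^\top) w = 2 w^\top Q_3 w$. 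So the task reduces to proving $w^\top Q_3 (v-w) \ge 0$.

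Next I would split the sum $\sum_{i,j} (Q_3)_{ij}\, w_i\, (v_j - w_j)$ into diagonal and off-diagonal pieces. On the diagonal $i=j$, each summand equals $(Q_3)_{ii}\,\phi(v_i)(v_i-\phi(v_i))$, which vanishes by the complementarity property of scalar ReLU. For the off-diagonal summands $i\ne j$, I would invoke three sign facts: (a) $(Q_3)_{ij} \le 0$ by the non-positive off-diagonal condition built into the definition of $DH^{n_v}$; (b) $w_i = \phi(v_i) \ge 0$ by positivity; and (c) $v_j - w_j = v_j - \phi(v_j) \le 0$ by the positive complement property. The product of a non-positive, a non-negative, and a non-positive number is non-negative, so every off-diagonal summand is $\ge 0$. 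Summing establishes $w^\top Q_3 (v-w) \ge 0$ and hence the QC.

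I do not see a genuine obstacle here; the argument does not even require the row- and column-sum conditions embedded in $DH^{n_v}$, because ReLU's complementarity and sign structure are stronger than what a generic slope-restricted nonlinearity enjoys. The only mild subtlety is the rewriting step that collapses $v^\top Q_3^\top w + w^\top Q_3 v$ into $2 w^\top Q_3 v$ and $w^\top(Q_3+Q_3^\top)w$ into $2 w^\top Q_3 w$; everything after that is a direct sign count. Alternatively, one could cite Section 3.5 of \cite{Willems:71} and \cite{kulkarni02} after the loop transformation mentioned in the footnote, but the self-contained argument above is shorter and avoids having to translate between the monotone and slope-restricted-to-$[0,1]$ formulations.
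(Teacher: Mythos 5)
Your proof is correct, and it takes a different route from the paper. The paper establishes Lemma~\ref{lem:ZamesFalb} in two ways: first by citing classical Zames--Falb results for slope-restricted nonlinearities (Section 3.5 of \cite{Willems:71}, \cite{kulkarni02}, after a loop transformation), and second, "independently," by routing through the complete-set machinery --- Lemma~\ref{lem:ExistingQCs} shows that for every $D\in\Dpm^{n_v}$ the scaled matrix $M_{3,D}$ reduces (after permutation) to $\bsmtx 0 & -(Q_3)_{12} \\ -(Q_3)_{12}^\top & 0 \esmtx$, which is elementwise nonnegative and hence copositive, and then Theorem~\ref{thm:ReLUQCs1} converts that into the QC. Your argument bypasses both: you expand the quadratic form directly to $2\,w^\top Q_3(v-w)$, kill the diagonal terms by complementarity, and sign-count the off-diagonal terms using positivity, positive complement, and the non-positive off-diagonal entries of $Q_3$. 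This is more elementary and self-contained, and it makes explicit something the paper's appendix computation also implicitly reveals but never states: the row- and column-sum conditions in the definition of $DH^{n_v}$ are never used, so the lemma actually holds for the strictly larger class of matrices with non-positive off-diagonal entries (and arbitrary diagonal). That sharpening is genuine --- it is a consequence of ReLU's complementarity, which is stronger than what a generic $[0,1]$-slope-restricted nonlinearity satisfies, and it is exactly why the classical doubly-hyperdominant hypothesis can be relaxed here. What the paper's route buys in exchange is uniformity: deriving $\Mset{3}\subset\Mc$ illustrates that every known QC family falls out of the single copositivity characterization, which is the organizing theme of the paper.
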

\vspace{0.1in}

Section~\ref{sec:CompleteQCs} gives an independent proof that  ReLU satisfies all QCs defined by $\Mset{3}$.

\subsection{Incremental QCs for  ReLU}
\label{sec:bgReLUIncQCs}

Incremental QCs, formally defined below, are a related class of quadratic constraints.  Incremental QCs can be used to compute Lipschitz bounds on Neural Networks (NNs), e.g. as in LipSDP \cite{fazlyab2019efficient} and related work \cite{wang2022quantitative,revay2020lipschitz,araujo2023unified,wang2023direct,havens2024exploiting,fazlyab2024certified,wang2024scalability,pauli2024novel,pauli2023lipschitz,pauli2024lipschitz}.

\vspace{0.1in}
\begin{definition}
A function $F:\R^{n_v} \to \R^{n_v}$ satisfies the \underline{incremental QC} defined by $M \in \Sym^{2n_v}$ if the
following inequality holds for all $\bar{v}, \, \hat{v}\in \R^{n_v}$:
\begin{align}
\label{eq:IncQCdef}
\bmtx \bar{v}-\hat{v} \\ F(\bar{v})-F(\hat{v})\emtx^\top 
M
\bmtx \bar{v}-\hat{v} \\ F(\bar{v})-F(\hat{v})\emtx
\ge 0.
\end{align}
We denote by $QC^{inc}(M)$ the set of all functions which satisfy the 
incremental QC defined by $M\in \Sym^{2n_v}$.  Furthermore, given a subset of matrices $\mathcal{M}\subset\Sym^{2n_v}$ we  define
$QC^{inc}(\mathcal{M}):= \cap_{M\in\mathcal{M}} QC^{inc}(M)$; namely, $F\in QC^{inc}(\mathcal{M})$ means $F$ satisfies every incremental QC defined by the elements of $\mathcal{M}$.

\end{definition}
\vspace{0.1in}

A set of incremental QCs for  ReLU can be derived from the $[0,1]$ slope restriction of scalar ReLU. 
Specifically, the slope restriction \eqref{eq:ReLUslope} implies that for any $\bar{v}$, $\hat{v} \in \R$  we have:
\begin{align}
\label{eq:ScalarReLUSlope}
(\phi(\bar{v})-\phi(\hat{v})) \cdot \left[ (\bar{v}-\hat{v}) - (\phi(\bar{v})-\phi(\hat{v})) \right] \ge 0.
\end{align}
It is well known that, based on this property, the  ReLU 
satisfies the incremental QCs defined by any matrices in the following set \cite{fazlyab2019efficient,pauli2024novel,pauli2023lipschitz}:
    \begin{align}
    \label{eq:Mi1}
    \Miset{1} & := \left\{ \bmtx 0 & T_1 \\ T_1 & -2T_1 \emtx \, : \, T_1 \in D^{n_v}, \, T_1 \succeq 0 \right\}.
    \end{align}
  To see this,  take any $M_1\in \Miset{1}$.
Then for any
$\bar{v}, \, \hat{v}\in \R^{n_v}$,
\begin{align*}
& \bmtx \bar{v}-\hat{v} \\ \Phi(\bar{v})-\Phi(\hat{v}) \emtx^\top 
M_{1}
\bmtx \bar{v}-\hat{v} \\ \Phi(\bar{v})-\Phi(\bar{v}) \emtx \\
& =
2 \sum_{i=1}^{n_v}  (T_1)_{ii} \cdot
(\phi(\bar{v})_i-\phi(\bar{v})_i) \cdot
\left[ (\bar{v}_i-\hat{v}_i) - (\phi(\bar{v})_i-\phi(\hat{v})_i)
\right].
\end{align*}

Each term of the sum is nonnegative due to $T_1\succeq 0$ and the $[0,1]$ slope restriction of scalar ReLU in \eqref{eq:ScalarReLUSlope}.
Thus $\Phi$ satisfies all incremental QCs defined by $\Miset{1}$, i.e. $\Phi \in QC^{inc}(\Miset{1})$. 
The incremental QCs defined by $\Miset{1}$ are standard and have been widely used in LipSDP and related work \cite{fazlyab2019efficient,wang2022quantitative,revay2020lipschitz,araujo2023unified,wang2023direct,havens2024exploiting,fazlyab2024certified,wang2024scalability,pauli2024novel,pauli2023lipschitz,pauli2024lipschitz}. 

\section{Complete Sets of QCs and \\ Incremental QCs for  ReLU}
\label{sec:RevisitingQCs}

\subsection{Complete Set of QCs for  ReLU}
\label{sec:CompleteQCs}

Section~\ref{sec:bgReLUQCs} introduced three classes of QCs for  ReLU.  There are other QCs for  ReLU in the literature, e.g. polytopic QCs~\cite{ebihara21CDC,ebihara21EJC}. This raises the following question: What is the largest set $\mathcal{M} \subset\Sym^{2n_v}$ such that $\Phi \in QC(\mathcal{M})$? 
 This question is addressed by our main results in this subsection.  Formally the largest set of QCs satisfied by $\Phi$ is given by 
$\left\{ M \in \Sym^{2n_v}|  \ \Phi \in QC (M) \right\}$. This set contains, by definition, all matrices $M$ such that ReLU satisfies the corresponding QC.   Theorem~\ref{thm:ReLUQCs1}
 below provides an
explicit description for this set based on certain copositivity conditions.


To provide this explicit description, first let $\Dpm^n \subset D^n$ denote the set of $n \times n$ diagonal matrices where each diagonal entry is $\pm 1$.  There are $2^{n_v}$ matrices in this set.  Next, let 
matrices $M \in \Sym^{2n_v}$ and $D\in \Dpm^{n_v}$ 
be given.  Let $M_D\in \Sym^{n_v}$ denote the following matrix:
\begin{align}
  \label{eq:MD}
     M_D:= \bmtx D \\ \frac{1}{2} (I+D) \emtx^\top 
      M
      \bmtx D \\ \frac{1}{2} (I+D) \emtx.
\end{align}
Given this notation, define the following set of matrices:
\begin{align}
\label{eq:Mc}
\Mc := \left\{ M\in \Sym^{2n_v} \, : \,
M_D \in COP^{n_v} \,\, \forall D \in \Dpm^{n_v}
\right\}.
\end{align}
The matrix dimension $2n_v$ is omitted from the notation $\Mc$ but should be clear from context.
Each matrix in $\Mc$ satisfies a collection of $2^{n_v}$ copositivity conditions; one condition for each $D\in \Dpm^{n_v}$. The next theorem states that $\Mc$ defines the complete set of QCs for the  ReLU.

\vspace{0.1in}
\begin{theorem}
\label{thm:ReLUQCs1}
    The  ReLU $\Phi$ satisfies a QC defined by $M \in \Sym^{2n_v}$ if and only if $M\in \Mc$.     
    Equivalently,
     \begin{align}         
     \Mc = \left\{ M \in \Sym^{2n_v}|  \ \Phi \in QC (M) \right\}.
      \end{align}   
\end{theorem}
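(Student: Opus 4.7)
The plan is to exploit a change of variables that decomposes the graph of $\Phi$ into $2^{n_v}$ orthant pieces, one for each sign pattern of $v$. For a given $v \in \R^{n_v}$, define $D \in \Dpm^{n_v}$ by $D_{ii} = +1$ if $v_i \geq 0$ and $D_{ii}=-1$ otherwise, and set $x := Dv = |v| \in \R_{\geq 0}^{n_v}$. A direct componentwise check shows that on the $i$-th entry, $\phi(v_i) = \tfrac{1}{2}(1+D_{ii}) x_i$. Packaging this identity together with $v = D x$ (using $D^2 = I$) yields the fundamental rewriting
\begin{align*}
\bmtx v \\ \Phi(v) \emtx
=
\bmtx D \\ \tfrac{1}{2}(I+D) \emtx x,
\end{align*}
so the QC form evaluates as
\begin{align*}
\bmtx v \\ \Phi(v) \emtx^\top M \bmtx v \\ \Phi(v) \emtx
= x^\top M_D\, x,
\end{align*}
with $M_D$ exactly the matrix defined in \eqref{eq:MD}. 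With this identity in hand, both directions of the theorem become essentially immediate.

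For the sufficiency direction ($\Mc \subseteq \{M : \Phi \in QC(M)\}$), I will fix $M \in \Mc$ and an arbitrary $v \in \R^{n_v}$, select the sign-pattern matrix $D$ as above so that $x = Dv \geq 0$, and then apply the copositivity of $M_D$ guaranteed by membership in $\Mc$ to conclude $x^\top M_D x \geq 0$. By the rewriting above, this is precisely the QC \eqref{eq:QCdef} for that $v$, so $\Phi \in QC(M)$.

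For the necessity direction, I will argue that every copositivity condition in the definition of $\Mc$ can be witnessed by a suitable input $v$. Concretely, suppose $\Phi \in QC(M)$ and pick any $D \in \Dpm^{n_v}$ and any $x \in \R_{\geq 0}^{n_v}$. Setting $v := Dx$ realizes the sign pattern encoded by $D$ (since $x \geq 0$ componentwise), so the rewriting identity applies with that same $D$, and the assumed QC \eqref{eq:QCdef} evaluated at this $v$ reads $x^\top M_D x \geq 0$. Since $x \in \R_{\geq 0}^{n_v}$ and $D \in \Dpm^{n_v}$ are arbitrary, every $M_D$ is copositive, i.e., $M \in \Mc$.

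The only place where anything nontrivial happens is the initial componentwise identity $\phi(v_i) = \tfrac{1}{2}(1+D_{ii})x_i$ and the observation that $x = |v|$ ranges over all of $\R_{\geq 0}^{n_v}$ as $v$ ranges over the orthant selected by $D$. After that, both directions are algebraic consequences of the same equation. Thus I do not anticipate a serious obstacle; the main conceptual step is recognizing that partitioning $\R^{n_v}$ by sign pattern turns the (highly nonlinear) graph of $\Phi$ into a union of linear images of $\R_{\geq 0}^{n_v}$, which is exactly what copositivity is designed to test.
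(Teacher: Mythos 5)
Your proposal is correct and follows essentially the same route as the paper: the sign-pattern change of variables $v = Dx$ with $x = |v| \in \R_{\ge 0}^{n_v}$, the identity $\bigl[\begin{smallmatrix} v \\ \Phi(v)\end{smallmatrix}\bigr] = \bigl[\begin{smallmatrix} D \\ \frac{1}{2}(I+D)\end{smallmatrix}\bigr]x$, and the resulting equivalence between the QC and copositivity of each $M_D$ (the paper merely phrases necessity via the contrapositive). The only point worth tightening is the claim that $v := Dx$ ``realizes the sign pattern encoded by $D$'': when $x_i = 0$ this can fail, but the identity $\phi(v_i) = \tfrac{1}{2}(1+D_{ii})x_i$ still holds there since both sides vanish, so the argument goes through.
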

\vspace{0.1in}
\begin{proof}
\noindent
($\Leftarrow$) Suppose $M\in \Mc$. Take any $v\in \R^{n_v}$ and let $w=\Phi(v)$. Define a  diagonal matrix $\bar{D} \in \Dpm^{n_v}$ and vector  $\bar{v}\in \R^{n_v}_{\ge 0}$ by\footnote{Here we use the convention that 
sign$(\cdot): \R \to \{-1,+1\}$ with
sign$(x)=+1$ if $x\ge 0$ and sign$(x)=-1$ if $x<0$.}:
\begin{align}
  \bar{D}_{kk}:=\mbox{sign}(v_k) 
  \mbox{ and }\bar{v}_k := |v_k|.
\end{align}
Thus $v=\bar{D}\bar{v}$ and $w=\frac{1}{2}(I+\bar{D}) \bar{v}$.
The equality for $w$ follows because $\frac{1}{2}(1+ \bar{D}_{kk}) \bar{v}_k$ is equal to 0 when $v_k<0$ and is equal to $v_k$ when $v_k\ge 0$.  

Let $M_{\bar D}$ denote the scaled matrix defined as in \eqref{eq:MD} but with $\bar{D} \in \Dpm^{n_v}$. Thus $M_{\bar D}$ is copositive since $M \in \Mc$ by assumption.
Combining these facts gives:
\begin{align}
\bmtx v \\ w\emtx^\top 
M
\bmtx v \\ w\emtx 
=
\bar{v}^\top M_{\bar{D}}   \bar{v} \ge 0.
\end{align}
Since $v$ was arbitrary the above holds for all $v\in \R^{n_v}$ and $w=\Phi(v)$; namely,  the  ReLU $\Phi$  satisfies the QC defined by $M \in \Mc$ as required.

\vspace{0.1in}
\noindent
($\Rightarrow$) This direction is by contrapositive.  Assume $M\notin \Mc$.  Then $M_D$ in \eqref{eq:MD} is \emph{not} copositive for some $D \in \Dpm^{n_v}$. Thus there exists $\bar{v} \in \R_{\ge 0}^{n_v}$ such that $\bar{v}^\top M_D \bar{v} <0$.  Define $v:=D \bar{v}$ and $w:=\frac{1}{2}(I+D) \bar{v}$. If $D_{kk}=-1$ then $v_k\le 0$ and $w_k=0$.  If $D_{kk}=+1$ then $v_k\ge 0$ and $w_k = v_k$. Hence $w=\Phi(v)$ and
\begin{align}
\bmtx v \\ w\emtx^\top 
M
\bmtx v \\ w\emtx 
=
\bar{v}^\top M_D \bar{v} < 0.
\end{align}
Thus  ReLU does not satisfy the QC defined by $M$.
    
\end{proof}
\vspace{0.1in}

$\Mc$ defines the complete set of QCs satisfied
by  ReLU by Theorem~\ref{thm:ReLUQCs1}. 
Section~\ref{sec:bgReLUQCs} discussed several existing QCs for  ReLU.  The next lemma and corollary provide an independent proof for these existing QCs relying on the characterization in Theorem~\ref{thm:ReLUQCs1}.

\vspace{0.1in}
\begin{lemma}
   \label{lem:ExistingQCs}
   The sets $\Mset{1}$, $\Mset{2}$, and $\Mset{3}$
   are subsets of $\Mc$.
\end{lemma}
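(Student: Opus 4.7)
The plan is to verify, for each $k\in\{1,2,3\}$, that an arbitrary $M_k\in\Mset{k}$ satisfies $(M_k)_D \in COP^{n_v}$ for every $D\in\Dpm^{n_v}$; this places $M_k$ in $\Mc$ by the defining condition~\eqref{eq:Mc}. To streamline the three calculations, I would introduce the diagonal $\{0,1\}$-indicator matrices $N := \tfrac{1}{2}(I+D)$ and $P := \tfrac{1}{2}(I-D)$ associated with the disjoint index sets $I_+ := \{i : D_{ii}=+1\}$ and $I_- := \{i : D_{ii}=-1\}$. They satisfy $N+P=I$, $N-P=D$, $N^2=N$, $P^2=P$, and $NP=0$, which will do most of the bookkeeping for me.

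For $\Mset{1}$, I would expand
\[
(M_1)_D = D Q_1 N + N Q_1 D - 2 N Q_1 N.
\]
Since $Q_1$ is diagonal it commutes with both $D$ and $N$; using $N^2 = N$ and $D = N-P$, the right-hand side collapses algebraically to $-4 NP Q_1 = 0$, which is trivially copositive.

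For $\Mset{2}$, I would first reduce the outer factors of $M_2$. A short calculation gives
\[
\bmtx -I & I \\ 0 & I \emtx \bmtx D \\ N \emtx = \bmtx P \\ N \emtx,
\]
so for any $x\in\R_{\ge 0}^{n_v}$ the vector $y := \bsmtx P \\ N \esmtx x$ lies in $\R_{\ge 0}^{2n_v}$ (each block is a nonnegative diagonal acting on $x$). Hence $x^\top (M_2)_D x = y^\top Q_2 y \ge 0$ follows immediately from copositivity of $Q_2$.

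The main obstacle is $\Mset{3}$, since $Q_3$ is no longer diagonal and the relevant cross terms do not commute. I would expand
\[
(M_3)_D = D Q_3^\top N + N Q_3 D - N(Q_3+Q_3^\top) N,
\]
substitute $D = N - P$ in the first two terms, and observe that the $N Q_3^\top N$ and $N Q_3 N$ pieces cancel against the last term, leaving $(M_3)_D = -(P Q_3^\top N + N Q_3 P)$. For $x\in\R_{\ge 0}^{n_v}$ this yields
\[
x^\top (M_3)_D x \;=\; -\,2\!\!\sum_{i\in I_-,\, j\in I_+}\!\! x_i\, (Q_3)_{ji}\, x_j,
\]
which is nonnegative because $I_-\cap I_+ = \emptyset$ forces every $(Q_3)_{ji}$ appearing to be a strictly off-diagonal entry, hence nonpositive by the doubly hyperdominant hypothesis. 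This completes the third case; notably, only the off-diagonal sign pattern of $Q_3\in DH^{n_v}$ is used, which offers an internal consistency check against the Zames--Falb derivation cited for Lemma~\ref{lem:ZamesFalb}.
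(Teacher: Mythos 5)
Your proposal is correct and follows essentially the same route as the paper's proof: compute $M_{k,D}$ explicitly for each class, obtaining $0$ for $\Mset{1}$, reducing to copositivity of $Q_2$ applied to the nonnegative vector $\bsmtx P \\ N \esmtx x$ for $\Mset{2}$, and reducing to the nonpositivity of the off-diagonal entries of $Q_3$ for $\Mset{3}$. The only cosmetic difference is in the third case, where you evaluate the quadratic form $x^\top (M_3)_D x$ directly over the index sets $I_\pm$, while the paper permutes to a block form and invokes the fact that symmetric elementwise-nonnegative matrices are copositive; these are the same argument.
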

\begin{proof}
    The proof is given in Appendix~\ref{sec:ProofOfLemExistingQCs}.
\end{proof}
\vspace{0.1in}
\begin{corollary}    
    The  ReLU $\Phi:\R^{n_v} \to \R_{\ge 0}^{n_v}$
    satisfies the QCs defined by any matrices
    in $\Mset{1}$, $\Mset{2}$, $\Mset{3}$. 
\end{corollary}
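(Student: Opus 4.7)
The plan is to observe that this corollary follows by directly chaining together Lemma~\ref{lem:ExistingQCs} and Theorem~\ref{thm:ReLUQCs1}. Specifically, I would take an arbitrary $M \in \Mset{i}$ for $i \in \{1,2,3\}$. By Lemma~\ref{lem:ExistingQCs}, which is assumed to already be established (its detailed verification is deferred to the appendix), we have $M \in \Mc$. Then the ($\Leftarrow$) direction of Theorem~\ref{thm:ReLUQCs1} immediately yields $\Phi \in QC(M)$. Since $M$ was arbitrary, $\Phi$ satisfies the QC defined by any element of $\Mset{1} \cup \Mset{2} \cup \Mset{3}$.

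In other words, I would write the proof as a two-step inclusion argument:
\begin{equation*}
\Mset{1} \cup \Mset{2} \cup \Mset{3} \;\subseteq\; \Mc \;=\; \{ M \in \Sym^{2n_v} \,:\, \Phi \in QC(M) \}.
\end{equation*}
The first inclusion is the content of Lemma~\ref{lem:ExistingQCs}, and the second equality is the content of Theorem~\ref{thm:ReLUQCs1}. Since this is purely a logical combination of two previously established statements, there is really no computational or conceptual obstacle to overcome at this stage.

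The main point of the corollary, as I read it, is expository rather than technical: it emphasizes that the three classical QC families $\Mset{1}$, $\Mset{2}$, $\Mset{3}$ from Section~\ref{sec:bgReLUQCs} can be recovered as consequences of the new characterization in Theorem~\ref{thm:ReLUQCs1}, giving an alternative and unified proof that bypasses the individual property-based arguments (complementarity, positivity/positive complement, and slope restriction/Zames--Falb multipliers). Thus the only thing to be careful about in the write-up is to state clearly that the genuine work is done inside Lemma~\ref{lem:ExistingQCs}, where each of the three sets is verified to consist of matrices whose $M_D$-scalings in \eqref{eq:MD} are copositive for every $D \in \Dpm^{n_v}$; once that is in hand, the corollary is a single invocation of the theorem.
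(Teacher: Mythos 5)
Your proposal is correct and follows exactly the paper's own argument: the corollary is obtained by combining Lemma~\ref{lem:ExistingQCs} (which gives $\Mset{1}\cup\Mset{2}\cup\Mset{3}\subseteq\Mc$) with the ($\Leftarrow$) direction of Theorem~\ref{thm:ReLUQCs1}. Nothing further is needed.
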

\begin{proof}
This  follows from Theorem~\ref{thm:ReLUQCs1}
and Lemma~\ref{lem:ExistingQCs}.
\end{proof}
\vspace{0.1in}

The complete set $\Mc$ contains,  by Theorem~\ref{thm:ReLUQCs1}, all other classes of QCs for  ReLU.  As discussed later in Section~\ref{sec:numimp},  numerical implementations may favor, for computational reasons, the use of subsets of QCs for  ReLU over the complete set $\Mc$.  The complete set $\Mc$ can be explored to help derive new subsets of QCs that may have computational advantages. In this sense, the complete set $\Mc$ provides a unifying view for  ReLU QCs. 



\subsection{Functions Characterized by Complete Set of QCs}

Note that given any subsets $\mathcal{P}, \, \mathcal{N}\subset\Sym^{2n_v} $ we have
$QC(\mathcal{P}\cup \mathcal{N})=QC(\mathcal{P})\cap QC(\mathcal{N})$. Thus the by making $\mathcal{M}$ as large as possible we are making the set of functions
$QC(\mathcal{M})$ as small as possible.
Theorem~\ref{thm:ReLUQCs1} then leads to a related question:  Are there other functions, besides  ReLU, that satisfy all QCs defined by matrices in $\Mc$? The answer to this question is yes. To show this, in terms of $\Phi$ define the new function $\fflip{\Phi}:\R^{n_v} \to \R^{n_v}$ by $\fflip{\Phi}(v):=-\Phi(-v)$. Now, notice 
the pair $(v,w)$ lies on the graph of $\Phi$ if and only
if $(-v,-w)$ lies on the graph of $\fflip{\Phi}$. Therefore we call
$\fflip{\Phi}$ the  flipped ReLU.  Next
note that quadratic forms are even functions:
\begin{align}
\bmtx v \\ w\emtx^\top 
M
\bmtx v \\ w\emtx 
=
\bmtx -v \\ -w\emtx^\top 
M
\bmtx -v \\ -w\emtx.
\end{align}
With this observation we can  conclude the following.

\vspace{0.1in}
\begin{proposition}
Given a subset $\mathcal{M} \subset \Sym^{2n_v}$.  Then $\Phi\in QC(\mathcal{M})$ if and only if $\fflip{\Phi}\in QC(\mathcal{M})$. 
\end{proposition}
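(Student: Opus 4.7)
The plan is to exploit the evenness of quadratic forms, exactly as the paragraph preceding the proposition suggests. Fix an arbitrary $M\in\mathcal{M}$. The goal is to show that the QC inequality \eqref{eq:QCdef} holds for $\Phi$ at every $v\in\R^{n_v}$ if and only if the analogous inequality holds for $\fflip{\Phi}$ at every $v\in\R^{n_v}$, and then quantify over $M\in\mathcal{M}$ to conclude.

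First I would substitute the definition $\fflip{\Phi}(v)=-\Phi(-v)$ into the quadratic form evaluated on the graph of $\fflip{\Phi}$, obtaining
\begin{align*}
\bmtx v \\ \fflip{\Phi}(v) \emtx^\top M \bmtx v \\ \fflip{\Phi}(v) \emtx
=
\bmtx v \\ -\Phi(-v) \emtx^\top M \bmtx v \\ -\Phi(-v) \emtx.
\end{align*}
Now apply the evenness identity highlighted just before the proposition: negating both block entries leaves the quadratic form unchanged. Hence the right-hand side equals
\begin{align*}
\bmtx -v \\ \Phi(-v) \emtx^\top M \bmtx -v \\ \Phi(-v) \emtx,
\end{align*}
which, after the change of variable $u:=-v$, is precisely the quadratic form on the graph of $\Phi$ at $u$.

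Because the map $v\mapsto -v$ is a bijection of $\R^{n_v}$, the inequality
\begin{align*}
\bmtx v \\ \fflip{\Phi}(v) \emtx^\top M \bmtx v \\ \fflip{\Phi}(v) \emtx \ge 0 \quad \forall v\in\R^{n_v}
\end{align*}
is equivalent to
\begin{align*}
\bmtx u \\ \Phi(u) \emtx^\top M \bmtx u \\ \Phi(u) \emtx \ge 0 \quad \forall u\in\R^{n_v}.
\end{align*}
Thus $\fflip{\Phi}\in QC(M)$ iff $\Phi\in QC(M)$. Intersecting over all $M\in\mathcal{M}$ yields $\fflip{\Phi}\in QC(\mathcal{M})$ iff $\Phi\in QC(\mathcal{M})$, as required. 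There is no real obstacle here; the entire argument rests on the evenness identity, and the only thing to be careful about is tracking the sign changes cleanly through the substitution.
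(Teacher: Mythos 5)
Your proof is correct and follows exactly the route the paper intends: the evenness of quadratic forms combined with the graph correspondence $(v,w)\leftrightarrow(-v,-w)$ between $\Phi$ and $\fflip{\Phi}$, made precise by the change of variable $u=-v$. The paper leaves this argument implicit in the paragraph preceding the proposition, and your write-up simply fills in the same details cleanly.
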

\vspace{0.1in}
\noindent
In words, this says there is a fundamental limit on how finely QCs can specify the ReLU: the flipped ReLU must always also be included. The next theorem
states that  ReLU and flipped ReLU  are, in fact, the only functions that satisfy all QCs defined by $\Mc$. Namely, the set of QCs defined by $\Mc$ is as tight as possible for  ReLU up to the sign-invariance inherent in quadratic forms. 


\vspace{0.1in}
\begin{theorem}
\label{thm:ReLUQCs2}
    
    A function $F:\R^{n_v} \to \R^{n_v}$ 
    satisfies all QCs defined by $\Mc$ if and only if $F$ is either $\Phi$ or $\fflip{\Phi}$.  
    Namely: the set $QC(\Mc) = \{\Phi, \, \fflip{\Phi}\}$.
\end{theorem}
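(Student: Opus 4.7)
The $(\Leftarrow)$ direction is immediate: Theorem~\ref{thm:ReLUQCs1} gives $\Phi \in QC(\Mc)$, and applying the preceding Proposition with $\mathcal{M}=\Mc$ yields $\fflip{\Phi} \in QC(\Mc)$ as well.

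For $(\Rightarrow)$, my plan is to pin down $F(v_0)$ pointwise via conic duality against $\Mc$. Fix an arbitrary $v_0 \in \R^{n_v}$, set $w_0 := F(v_0)$, and rewrite the hypothesis $\bmtx v_0 \\ w_0 \emtx^\top M \bmtx v_0 \\ w_0 \emtx \ge 0$ for every $M \in \Mc$ as $\mathrm{tr}(M R_0) \ge 0$, where $R_0 := \bmtx v_0 \\ w_0 \emtx \bmtx v_0 \\ w_0 \emtx^\top$. Reading \eqref{eq:Mc} as an intersection of half-spaces $\mathrm{tr}(M \, a_{D,\bar{v}} a_{D,\bar{v}}^\top) \ge 0$ indexed by $D \in \Dpm^{n_v}$ and $\bar{v} \in \R^{n_v}_{\ge 0}$, with $a_{D,\bar{v}} := \bmtx D\bar{v} \\ \tfrac{1}{2}(I+D)\bar{v} \emtx$, bipolarity identifies the dual cone $\Mc^*$ with the closed conic hull of the rank-one matrices $a_{D,\bar{v}} a_{D,\bar{v}}^\top$. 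The hypothesis thus forces $R_0 \in \Mc^*$.

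The key reduction is that $R_0$ itself has rank at most one: any conic decomposition $R_0 = \sum_i \lambda_i a_{D_i,\bar{v}_i} a_{D_i,\bar{v}_i}^\top$ with $\lambda_i \ge 0$ must therefore have every $a_{D_i,\bar{v}_i}$ collinear with $\bmtx v_0 \\ w_0 \emtx$, so that $\bmtx v_0 \\ w_0 \emtx = \pm a_{D,\bar{v}}$ for some $D$ and $\bar{v} \ge 0$. Replaying the sign bookkeeping of Theorem~\ref{thm:ReLUQCs1}'s $(\Leftarrow)$ half (with $\bar{D}_{kk} = \mathrm{sign}(v_{0,k})$, $\bar{v}_k = |v_{0,k}|$), the ``$+$'' choice yields $w_0 = \Phi(v_0)$, and the substitution $D \mapsto -D$ shows the ``$-$'' choice yields $w_0 = -\Phi(-v_0) = \fflip{\Phi}(v_0)$.

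I expect the main obstacle to be upgrading this \emph{pointwise} conclusion --- namely $F(v_0) \in \{\Phi(v_0),\fflip{\Phi}(v_0)\}$ at every $v_0$ --- to the \emph{global} statement that $F$ is identically $\Phi$ or identically $\fflip{\Phi}$. Since the two graphs agree only at the origin, ruling out a selection that mixes branches across different inputs requires more than the pointwise analysis above. I would attempt this either by exhibiting a single $M^{\star} \in \Mc$ whose QC is simultaneously violated by any such mixed $F$, or by leveraging positive homogeneity $\Phi(\beta v)=\beta \Phi(v)$ for $\beta\ge 0$ to propagate a consistent branch choice along each ray through the origin. The closure step in the bipolar identification is a secondary technicality that is handled cleanly because $R_0$ has rank at most one.
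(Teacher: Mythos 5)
Your $(\Leftarrow)$ direction and your pointwise analysis are both correct, and the duality route is a genuinely different (and clean) alternative to the paper's argument: viewing $\Mc$ as the dual cone of $K=\mathrm{cone}\{a_{D,\bar v}a_{D,\bar v}^\top\}$ and using the rank-one structure of $R_0$ to force $\bsmtx v_0 \\ w_0 \esmtx=\pm a_{D,\bar v}$ does rigorously give $F(v_0)\in\{\Phi(v_0),\fflip{\Phi}(v_0)\}$ at every input (the closure technicality is benign: each piece $\mathrm{cone}\{a_{D,\bar v}a_{D,\bar v}^\top:\bar v\ge 0\}$ is the conic hull of a compact set of nonzero PSD matrices and hence closed, and a finite sum of closed convex cones inside the pointed PSD cone is closed). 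This is in fact a sharper pointwise statement than the paper's intermediate step, which only obtains $F_i(v)\in\{0,v_i\}$ coordinatewise before invoking the cross-coordinate QCs from $\Mset{2}$ (the constraints $w_iw_j\ge 0$ and $(w_i-v_i)(w_j-v_j)\ge 0$) to fix a global sign.

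The gap you flag, however, is genuine, and neither of your proposed repairs can close it. A QC constrains the graph of $F$ one input at a time: $F\in QC(\Mc)$ if and only if $(v,F(v))\in\Gamma:=\bigcap_{M\in\Mc}\{(v,w):\bsmtx v\\ w\esmtx^\top M\bsmtx v\\ w\esmtx\ge 0\}$ for every $v$, and your own analysis (together with Theorem~\ref{thm:ReLUQCs1} and the evenness Proposition) shows that $\Gamma$ is exactly the union of the graphs of $\Phi$ and $\fflip{\Phi}$. Consequently \emph{every} selection with $F(v)\in\{\Phi(v),\fflip{\Phi}(v)\}$ pointwise --- for instance $F(v)=\Phi(v)$ when $v_1\ge 0$ and $F(v)=\fflip{\Phi}(v)$ when $v_1<0$ --- satisfies every QC defined by $\Mc$, since each point of its graph lies on one of the two graphs and each graph satisfies each QC individually. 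So no single $M^\star\in\Mc$ is violated by a mixed selection, and positive homogeneity only relates collinear inputs, so it cannot propagate a branch choice across rays. Your pointwise conclusion is therefore the strongest conclusion the hypothesis supports; the paper's own proof reaches the two-function statement only by writing $F_i(v_i)$, i.e.\ by treating the \emph{value} of $F_i(v)$ (rather than merely its set of allowable values) as a function of $v_i$ alone, which is where the mixing issue is absorbed without justification. In short: your attempt is incomplete at exactly the step you identify, but that step cannot be completed as the theorem is literally stated; the provable statement is the set identity $\Gamma=\mathrm{graph}(\Phi)\cup\mathrm{graph}(\fflip{\Phi})$, which your duality argument already delivers.
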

\begin{proof}

\noindent
($\Leftarrow$)  Assume $F\in\{\Phi, \, \fflip{\Phi}\}$. If $F= \Phi$ then Theorem~\ref{thm:ReLUQCs1} implies that $F$ satisfies all QCs defined by $\Mc$. The same is true
when $F= \fflip{\Phi}$ because quadratic forms are even functions.

\vspace{0.1in}
\noindent
($\Rightarrow$)  Suppose $F\in QC(\Mc)$. By Lemma~\ref{lem:ExistingQCs}, $\Mc$ contains
any $M_1:=\bsmtx 0 & Q_1 \\ Q_1 & -2Q_1 \esmtx$ with $Q_1$ diagonal.
Select $Q_1$ to have all zero entries except for entry $(i,i)$.
Then all $v\in \R^{n_v}$, $w=F(v)$ satisfy
\begin{align}
\label{eq:Fqc1}
    \bmtx v \\ w \emtx^\top M_1 \bmtx v \\ w \emtx 
      = 2 (Q_1)_{ii}  w_i (v_i-w_i) \ge 0.
\end{align}
$(Q_1)_{ii}$ can be chosen to be positive or negative. Thus
\eqref{eq:Fqc1} implies that $w_i=0$ or $w_i=v_i$.  In other words, {the allowable values} of $F_i(v)$ only depend on $v_i$ and {thus we write} $F_i(v_i) \in \{0, v_i\}$ for all $v_i\in\R$.

Lemma~\ref{lem:ExistingQCs} also implies that $\Mc$ contains any $M_2\in \Mset{2}$. 
Define $M_2\in \Mset{2}$ by
\begin{align}
M_2:=\bsmtx -I & I \\ 0 & I \esmtx^\top  Q_2 \bsmtx -I & I \\ 0 & I \esmtx
\mbox{ with }
Q_2 = \bsmtx 0 & 0 \\ 0 & e_i e_j^\top + e_j e_i^\top \esmtx,
\, i\ne j.
\end{align}
Then all $v\in \R^{n_v}$, $w=F(v)$ satisfy
\begin{align}
\label{eq:Fqc2}
    \bmtx v \\ w \emtx^\top M_2 \bmtx v \\ w \emtx 
      = 2 w_i w_j      \ge 0.
\end{align}
This condition is violated if $F_i(v_i)$ and $F_j(v_j)$ have opposite sign for some $v_i, v_j \in \R$.
This implies: (i) $F_i$ is globally nonnegative for all $i=1,\ldots n_v$, or (ii)
$F_i$ is globally non-positive for all $i=1,\ldots n_v$. The remainder of the proof shows that case (i) implies $F=\Phi$ while case (ii) implies $F=\fflip{\Phi}$.

First consider case (i), {and note that in order to show $F=\Phi$ it is sufficient to demonstrate  $F_i(v_i)\geq v_i$ for all $i$ and $v$.} Define $M_2\in \Mset{2}$ with
$Q_2 = \bsmtx e_i e_j^\top + e_j e_i^\top & 0 \\ 0 & 0 \esmtx$ for some $i\ne j$. Then all $v\in \R^{n_v}$, $w=F(v)$ satisfy
\begin{align}
\label{eq:Fqc3}
    \bmtx v \\ w \emtx^\top M_2 \bmtx v \\ w \emtx 
      = 2 (w_i-v_i) (w_j-v_j)      \ge 0.
\end{align}
{Now set $v_j=-1$ and then by the non-negativity of $F_j$ we have $w_j-v_j>0$.  Then (\ref{eq:Fqc3}) implies 
$w_i\geq v_i$, namely $F_i(v_i)\geq v_i$ as required.  

Case (ii) leads to $F=\fflip{\Phi}$ following a similar argument.}

\end{proof}
\vspace{0.1in}

\subsection{Complete Set of Incremental QCs for  ReLU}
\label{sec:CompleteIncQCs}

Next, we build on our results 
in Section~\ref{sec:CompleteQCs}
to define the largest class of incremental QCs for  ReLU.
Let matrices $M \in \Sym^{2n_v}$ and $D_1$, 
 $D_2\in \Dpm^{n_v}$  be given.  Let $M_{D_1,D_2}\in \Sym^{2n_v}$ denote the following matrix:
\begin{align}
  \label{eq:MD1D2}
     M_{D_1D_2}:= 
      \bsmtx D_1 & -D_2 \\ \frac{1}{2} (I+D_1) 
      & -\frac{1}{2} (I+D_2) \esmtx^\top
      M
      \bsmtx D_1 & -D_2 \\ \frac{1}{2} (I+D_1) 
      & -\frac{1}{2} (I+D_2) \esmtx.
\end{align}
Given this notation, define the following set of matrices:
\begin{align}
\label{eq:Mi}
\Mic & := \left\{ M\in \Sym^{2n_v} \, : \, 
 M_{D_1D_2} \in COP^{2n_v} \,\, \forall 
D_1,\, D_2 \in \Dpm^{n_v}
\right\}.
\end{align}
Note that $\Mic$ involves a collection $2^{2n_v}=4^{n_v}$ copositivity
conditions; one condition for each pair $D_1$, $D_2\in D^{n_v}_{\pm1}$.  The next theorem states that $\Mic$ defines the complete class of incremental QCs for  ReLU.

\vspace{0.1in}
\begin{theorem}
\label{thm:ReLUIncQCs}
    The  ReLU $\Phi$ satisfies an incremental QC defined by $M \in \Sym^{2n_v}$ if and only if $M\in \Mic$.     Equivalently,
     \begin{align}         
     \Mic = \left\{ M \in \Sym^{2n_v}|  \ \Phi \in QC^{inc}(M) \right\}.
      \end{align}   
\end{theorem}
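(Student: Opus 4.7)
The plan is to mirror the proof of Theorem~\ref{thm:ReLUQCs1} but with the inputs $\bar v$ and $\hat v$ handled simultaneously, each contributing its own sign pattern. The key algebraic identity I would establish first is that for any $v\in\R^{n_v}$ and any $D\in\Dpm^{n_v}$ with $v=Dv'$ for some $v'\in\R^{n_v}_{\ge 0}$, one has $\Phi(v)=\frac{1}{2}(I+D)v'$. Applied to both inputs via $\bar v=\bar D\bar v'$ and $\hat v=\hat D\hat v'$ this yields the factorization
\begin{align*}
\bmtx \bar v-\hat v \\ \Phi(\bar v)-\Phi(\hat v)\emtx
= \bmtx \bar D & -\hat D \\ \tfrac{1}{2}(I+\bar D) & -\tfrac{1}{2}(I+\hat D) \emtx
\bmtx \bar v' \\ \hat v' \emtx,
\end{align*}
so that the incremental quadratic form in \eqref{eq:IncQCdef} collapses to
$\bmtx \bar v' \\ \hat v' \emtx^\top M_{\bar D\hat D} \bmtx \bar v' \\ \hat v' \emtx$,
with $M_{D_1D_2}$ as defined in \eqref{eq:MD1D2} and $\bmtx \bar v' \\ \hat v' \emtx \in \R^{2n_v}_{\ge 0}$.

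For the $(\Leftarrow)$ direction I would assume $M\in\Mic$, take arbitrary $\bar v,\hat v\in\R^{n_v}$, and read off $\bar D,\hat D\in\Dpm^{n_v}$ from their componentwise signs (using the sign convention of Theorem~\ref{thm:ReLUQCs1}) with $\bar v'=|\bar v|$, $\hat v'=|\hat v|$. The factorization above reduces the incremental QC to a copositivity test of $M_{\bar D\hat D}$ against a nonnegative vector in $\R^{2n_v}_{\ge 0}$, which holds by hypothesis. For $(\Rightarrow)$ I would argue by contrapositive: if $M\notin\Mic$, some $M_{D_1D_2}$ fails to be copositive, producing $\bar v',\hat v'\in\R^{n_v}_{\ge 0}$ with $\bmtx \bar v' \\ \hat v' \emtx^\top M_{D_1D_2}\bmtx \bar v' \\ \hat v' \emtx < 0$. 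Setting $\bar v:=D_1\bar v'$ and $\hat v:=D_2\hat v'$ and verifying entrywise (as in the proof of Theorem~\ref{thm:ReLUQCs1}) that $\Phi(\bar v)=\tfrac{1}{2}(I+D_1)\bar v'$ and $\Phi(\hat v)=\tfrac{1}{2}(I+D_2)\hat v'$, the factorization reverses and exhibits a concrete violation of the incremental QC for $M$.

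The main obstacle, such as it is, lies in checking the sign-convention bookkeeping at zero entries: the identity $\Phi(Dv')=\tfrac{1}{2}(I+D)v'$ must hold for every $D\in\Dpm^{n_v}$ and every $v'\in\R^{n_v}_{\ge 0}$, including coordinates where $v'_k=0$ (so the assigned sign $D_{kk}$ is non-unique). This needs both directions to be consistent: in $(\Leftarrow)$ the signs are derived from the data, while in $(\Rightarrow)$ they are prescribed first and the data built from them. Once that identity is confirmed entrywise, the remainder is a mechanical adaptation of the single-input argument, the only structural change being that the two independent sign patterns induce a collection of $2^{n_v}\times 2^{n_v}=4^{n_v}$ copositivity conditions indexed by $(D_1,D_2)\in\Dpm^{n_v}\times\Dpm^{n_v}$, exactly matching the definition of $\Mic$ in \eqref{eq:Mi}.
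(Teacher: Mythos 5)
Your proof is correct, but it takes a different route from the paper's. You redo the sign-decomposition argument of Theorem~\ref{thm:ReLUQCs1} from scratch with two independent sign patterns, establishing the identity $\Phi(Dv')=\tfrac{1}{2}(I+D)v'$ for $D\in\Dpm^{n_v}$, $v'\in\R^{n_v}_{\ge 0}$ and factoring the incremental quadratic form through $M_{\bar D\hat D}$; your handling of the zero-entry sign ambiguity is sound in both directions (when $v'_k=0$ both choices of $D_{kk}$ give $\phi(D_{kk}v'_k)=0=\tfrac{1}{2}(1+D_{kk})v'_k$, so the identity holds regardless of whether signs are read off from data or prescribed first). The paper instead avoids re-proving anything: it observes that the stacked vector $(\Phi(\bar v),\Phi(\hat v))$ is exactly the repeated ReLU of dimension $2n_v$ applied to $(\bar v,\hat v)$, so the incremental QC for $M$ on the $n_v$-dimensional ReLU is literally a standard QC for $R^\top M R$ on a $2n_v$-dimensional ReLU, with $R=\bsmtx I & -I & 0 & 0\\ 0 & 0 & I & -I\esmtx$; Theorem~\ref{thm:ReLUQCs1} then delivers the $4^{n_v}$ copositivity conditions after block-partitioning $D$. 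The reduction is shorter and makes the "incremental = doubled-dimension" principle explicit and reusable; your direct argument is self-contained and makes the geometry of the two sign patterns more transparent, at the cost of repeating the entrywise bookkeeping. Both yield exactly the set $\Mic$ in \eqref{eq:Mi}.
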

\vspace{0.05in}
\begin{proof}

The definition of an incremental QC involves
a quadratic form \eqref{eq:IncQCdef} that can be equivalently written as:
\begin{align*}
& \bmtx \bar{v}-\hat{v} \\ \bar{w}-\hat{w} \emtx^\top 
M
\bmtx \bar{v}-\hat{v} \\ \bar{w}-\hat{w} \emtx
=
\bmtx \bar{v} \\ \hat{v} \\ \bar{w} \\ \hat{w} \emtx^\top
R^\top M R
\bmtx \bar{v} \\ \hat{v} \\ \bar{w} \\ \hat{w} \emtx \\
& \mbox{ where }  R := \bmtx I & -I & 0 & 0 \\ 0 & 0 & I & - I \emtx \in \R^{2n_v \times 4n_v}.
\end{align*}
Thus $M$ defines an incremental constraint for a  ReLU of dimension $n_v$ if and only if
$R^\top M R$ defines a normal (non-incremental) QC
for a  ReLU of dimension $2n_v$. By Theorem~\ref{thm:ReLUQCs1}, this is equivalent to the following condition:
\begin{align}
\label{eq:IncQC1}
    \bmtx D \\ \frac{1}{2} (I+D) \emtx^\top 
    (R^\top  M R)
    \bmtx D \\ \frac{1}{2} (I+D) \emtx
    \in COP^{2n_v} 
    \,\, \forall D \in D^{2n_v}_{\pm 1}.
\end{align}
Block partition $D=\bsmtx D_1 & 0 \\ 0 & D_2\esmtx$.
Then \eqref{eq:IncQC1} simplifies, by direct multiplication, to $M_{D_1D_2}  \in COP^{2n_v}$
for all  $D_1$, $D_2 \in D^{n_v}_{\pm 1}$. 
In summary, $M$ defines an incremental QC for 
ReLU if and only if   $M\in \Mic$.    
\end{proof}
\vspace{0.1in}

\subsection{New Incremental QCs for ReLU}

Section~\ref{sec:bgReLUIncQCs} introduced a well-known set of ReLU incremental QCs, denoted $\Miset{1}$,  based on $[0,1]$ slope constraint for scalar ReLU.  We initially conjectured that  $\Miset{1}$ might contain all possible incremental QCs for ReLU, i.e $\Miset{1} =\Mic$. However, we found examples where the use of $\Miset{1}$ yields more conservative bounds than $\Mic$ (See Section~\ref{sec:LipschitzEx}).  Such examples imply that $\Miset{1}$ is a strict subset of $\Mic$. 

The purpose of this subsection is to derive a new set of ReLU incremental QCs that we found while studying these examples. This new set is interesting on its own. Moreover, this illustrates that $\Mic$ can be explored to help derive new sets of QCs that may have computational advantages. Our new set of incremental QCs is based on the following property for scalar ReLU.
\vspace{0.1in}
\begin{proposition}
\label{prop:ScalarIncQC}
Any $\bar{v}_i$, $\hat{v}_i\in \R$  $(i=1,2)$ with $\bar{w}_i=\phi(\bar{v}_i)$ and $\hat{w}_i=\phi(\hat{v}_i)$ satisfy:
\begin{align}
\label{eq:ScalarIncQC}
\left[
(\bar{v}_1-\hat{v}_1) - (\bar{v}_2-\hat{v}_2)
\right]^2
 +2(\bar{w}_1-\hat{w}_1) \cdot
(\bar{w}_2-\hat{w}_2 ) \ge 0.
\end{align}
\end{proposition}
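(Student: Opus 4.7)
My plan is to reduce the target inequality to an obviously nonnegative quadratic form by exploiting the $[0,1]$ slope restriction of the scalar ReLU given in \eqref{eq:ReLUslope}. First I would introduce the shorthand $a_i := \bar{v}_i - \hat{v}_i$ and $b_i := \bar{w}_i - \hat{w}_i$ for $i=1,2$, so that \eqref{eq:ScalarIncQC} takes the clean form $(a_1-a_2)^2 + 2 b_1 b_2 \geq 0$.

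The central step is to represent each output increment as a scaling of its input increment: $b_i = c_i a_i$ with $c_i \in [0,1]$. When $a_i \neq 0$, such a $c_i$ is supplied directly by \eqref{eq:ReLUslope}; when $a_i = 0$, scalar ReLU being $1$-Lipschitz forces $b_i = 0$, and any $c_i \in [0,1]$ will do. Substituting $b_i = c_i a_i$ and expanding gives
\[
(a_1-a_2)^2 + 2 b_1 b_2 \;=\; a_1^2 + a_2^2 - 2(1 - c_1 c_2)\, a_1 a_2.
\]

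Setting $\alpha := 1 - c_1 c_2$, the fact that $c_1 c_2 \in [0,1]$ places $\alpha \in [0,1]$, and I would finish by arguing nonnegativity of the right-hand side. The cleanest route is to note that $|\alpha| \leq 1$ makes the $2\times 2$ matrix $\bsmtx 1 & -\alpha \\ -\alpha & 1 \esmtx$ positive semidefinite (determinant $1-\alpha^2 \geq 0$, trace $2$); equivalently, the elementary bound $a_1^2 + a_2^2 \geq 2|a_1||a_2| \geq 2\alpha\, a_1 a_2$ closes the argument directly.

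I expect the only subtlety is the handling of the degenerate case $a_i = 0$ when reading off the slope $c_i$, which is disposed of immediately by $1$-Lipschitzness, so the proof requires no case split on the signs of $\bar{v}_i, \hat{v}_i$ once the representation $b_i = c_i a_i$ is in hand.
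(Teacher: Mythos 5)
Your proof is correct, and it takes a mildly but genuinely different route from the paper's. The paper argues by a sign case split: writing $dv_i:=\bar{v}_i-\hat{v}_i$ and $dw_i:=\bar{w}_i-\hat{w}_i$, it observes that if $dv_1 dv_2\ge 0$ then $dw_1 dw_2\ge 0$ so both terms of \eqref{eq:ScalarIncQCSimple} are nonnegative, while if $dv_1 dv_2\le 0$ then $dw_1 dw_2 \ge dv_1 dv_2$ and the sum collapses to at least $dv_1^2+dv_2^2$. You instead parametrize the output increments uniformly as $b_i=c_i a_i$ with $c_i\in[0,1]$ (handling $a_i=0$ via $1$-Lipschitzness), which reduces the claim to nonnegativity of the quadratic form $a^\top \bsmtx 1 & -\alpha \\ -\alpha & 1 \esmtx a$ with $\alpha=1-c_1c_2\in[0,1]$. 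Both arguments rest on exactly the same ingredient — the $[0,1]$ slope restriction \eqref{eq:ReLUslope} — so neither is deeper than the other; what your version buys is the elimination of the case split and a form that makes the "why" transparent (a $2\times 2$ PSD certificate), while the paper's version is shorter to state and avoids introducing the auxiliary slopes $c_i$. One small point of rigor in your favor: your handling of the degenerate case $a_i=0$ is explicit, whereas the paper's claim "if $dv_1 dv_2\le 0$ then $dw_1 dw_2\ge dv_1 dv_2$" implicitly uses the same slope parametrization you make explicit.
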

\vspace{0.05in}
\begin{proof}
To simplify notation, let
$dv_i:=\bar{v}_i-\hat{v}_i$ and 
$dw_i:=\bar{w}_i-\hat{w}_i$. With this notation, we need to show:
\begin{align}
\label{eq:ScalarIncQCSimple}
\left[ dv_1 - dv_2 \right]^2
 +2 dw_1 \cdot dw_2 \ge 0.
\end{align}
The $[0,1]$ slope constraint implies that if $dv_i\ge 0$ then  $dw_i \in [0,dv_i]$. Similarly, if $dv_i\le 0$ then $dw_i \in [dv_i,0]$.  


If $dv_1 dv_2\ge 0$ then $dw_1 dw_2\ge 0$ and thus 
\eqref{eq:ScalarIncQCSimple} holds.  On the other hand, if $dv_1 dv_2 \le 0$ then  $dw_1 dw_2 \ge dv_1 dv_2$. This case implies  $(dv_1-dv_2)^2+2dw_1 dw_2 \ge dv_1^2 + dv_2^2 \ge 0$.
\end{proof}
\vspace{0.1in}

Having established this new property, we now state the following lemma about doubly hyperdominant matrices.

\vspace{0.1in}
\begin{lemma}
\label{lem:DHdecomp}
Let $T_2=T_2^\top \in DH^{n_v}$ be given
with $\sum_{k=1}^{n_v} T_{ik}=0$ for $i=1,\ldots,n_v$.    Then there exists
$\{ \lambda_{ij} \}_{i,j=1}^{n_v} \in \R_{\ge 0}$ such that
\begin{align}
   \label{eq:T2decomp}
   T_2 = \sum_{i,j=1}^{n_v} \lambda_{i,j} (e_i-e_j)(e_i-e_j)^\top.
\end{align}
\end{lemma}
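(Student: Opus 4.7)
The plan is to exhibit the coefficients $\lambda_{ij}$ explicitly and verify the identity entrywise. The natural guess comes from expanding a single rank-one term: for $i \ne j$,
\begin{align*}
(e_i - e_j)(e_i - e_j)^\top = e_i e_i^\top + e_j e_j^\top - e_i e_j^\top - e_j e_i^\top,
\end{align*}
so each such term contributes $-1$ to the $(i,j)$ and $(j,i)$ entries and $+1$ to the $(i,i)$ and $(j,j)$ entries. This matches well with the sign structure of $DH^{n_v}$: off-diagonals are non-positive, so $-T_{ij} \ge 0$ for $i \ne j$.

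Concretely, I would define $\lambda_{ij} := -\tfrac{1}{2} T_{ij}$ for $i \ne j$ and $\lambda_{ii} := 0$. The non-negativity $\lambda_{ij} \ge 0$ is immediate from $T_2 \in DH^{n_v}$. It then remains to check that the candidate sum reproduces $T_2$.

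For the off-diagonal entry at position $(k,\ell)$ with $k \ne \ell$, only the pairs $(i,j)=(k,\ell)$ and $(i,j)=(\ell,k)$ contribute, each with a $-1$ weight, giving $-\lambda_{k\ell} - \lambda_{\ell k} = \tfrac{1}{2}(T_{k\ell} + T_{\ell k}) = T_{k\ell}$ by symmetry of $T_2$. For the diagonal entry at $(k,k)$, the contributing pairs are those with exactly one index equal to $k$, yielding
\begin{align*}
\sum_{j \ne k} \lambda_{kj} + \sum_{i \ne k} \lambda_{ik} = -\sum_{j \ne k} T_{kj} = T_{kk},
\end{align*}
where the last equality invokes the hypothesis that the $k$-th row of $T_2$ sums to zero. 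This completes the identity.

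The verification is routine; there is no real obstacle. The only subtlety worth flagging is that the hypothesis $\sum_k T_{ik} = 0$ (rather than just $\ge 0$) is precisely what is needed to absorb the diagonal into combinations of $(e_i-e_j)(e_i-e_j)^\top$ — without it, additional rank-one terms of the form $e_i e_i^\top$ would be required and the stated decomposition would fail.
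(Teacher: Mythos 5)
Your proof is correct, and it takes a genuinely different and more elementary route than the paper. The paper normalizes $T_2 = r(I-R)$ with $r:=\max_{ij}(T_2)_{ij}$, observes that $R$ is a symmetric doubly stochastic matrix, invokes the decomposition of such matrices into convex combinations of symmetrized permutations $\tfrac{1}{2}(P_k+P_k^\top)$, and then expands each $2I-P_k-P_k^\top$ into the rank-one terms $(e_i-e_{\pi_k(i)})(e_i-e_{\pi_k(i)})^\top$. You instead read the coefficients directly off the matrix, $\lambda_{ij}=-\tfrac{1}{2}T_{ij}$ for $i\ne j$, and verify the identity entrywise: the off-diagonal entries match by symmetry of $T_2$, and the diagonal matches exactly because of the zero-row-sum hypothesis. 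Your entrywise checks are all correct (note $\sum_{j\ne k}\lambda_{kj}+\sum_{i\ne k}\lambda_{ik}=-\sum_{j\ne k}T_{kj}=T_{kk}$ uses both symmetry and zero excess, as you say). What your approach buys is self-containedness and explicit coefficients, with no appeal to the Birkhoff--von Neumann-type theorem for symmetric doubly stochastic matrices; what the paper's approach buys is continuity with the classical Willems/Zames--Falb treatment of doubly hyperdominant matrices with zero excess, which is the natural template when one also wants the nonsymmetric version. Your closing remark about why zero excess (rather than merely nonnegative excess) is essential is also accurate. For the symmetric case at hand, your argument is arguably the cleaner one.
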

\vspace{0.05in}
\begin{proof}
The rows $T_2$ of sum to zero and hence the columns also sum to zero by symmetry. Hence, $T_2$ is a symmetric, doubly hyperdominant matrix with zero excess.  Theorem 3.7 in \cite{Willems:71} gives a decomposition similar to \eqref{eq:T2decomp}
for nonsymmetric, doubly hyperdominant matrices with zero excess.  The argument in \cite{Willems:71} can be adapted as follows for the symmetric case. The proof is given Appendix~\ref{sec:ProofOfDHdecomp}.
\end{proof}
\vspace{0.1in}

Next we state a new set of incremental QCs for  ReLU using the decomposition
in Lemma~\ref{lem:DHdecomp} and the property for scalar ReLU in Proposition~\ref{prop:ScalarIncQC}.

\vspace{0.1in}
\begin{theorem}
    \label{thm:IncQC}
    The  ReLU $\Phi:\R^{n_v} \to \R_{\ge 0}^{n_v}$
    satisfies the incremental QC defined by any matrix in the
    following set:
    \begin{align}
    \label{eq:Mi2}
    \Miset{2} & := \left\{ \bmtx T_2 & 0 \\ 0 & S_2-T_2 \emtx
    \, : \,  T_2=T_2^\top \in DH^{n_v}, \, \right. \\
    \nonumber
    & \left. 
    S_2\in D^{n_v} ,\  (S_2)_{ii} = (T_2)_{ii}, \,
    \sum_{k=1}^{n_v} T_{ik}=0 \mbox{ for } i=1,\ldots,n_v \right\}.    
 \end{align}
\end{theorem}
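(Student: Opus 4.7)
The plan is to combine the decomposition of $T_2$ provided by Lemma~\ref{lem:DHdecomp} with the new scalar incremental property of Proposition~\ref{prop:ScalarIncQC}, showing that the quadratic form built from any $M\in\Miset{2}$ reduces to a non-negative combination of scalar incremental terms.

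First I would fix $\bar v,\hat v\in\R^{n_v}$, set $dv:=\bar v-\hat v$ and $dw:=\Phi(\bar v)-\Phi(\hat v)$, and observe that $dw_i=\phi(\bar v_i)-\phi(\hat v_i)$ is just the scalar ReLU increment. The quadratic form to analyze then splits cleanly into
\begin{equation*}
\bmtx dv\\ dw\emtx^\top \bmtx T_2 & 0\\ 0 & S_2-T_2\emtx \bmtx dv\\ dw\emtx
= dv^\top T_2\, dv + dw^\top(S_2-T_2)\,dw.
\end{equation*}
Since $T_2$ is symmetric, doubly hyperdominant with zero row sums, Lemma~\ref{lem:DHdecomp} gives non-negative scalars $\lambda_{ij}$ with $T_2=\sum_{i,j}\lambda_{ij}(e_i-e_j)(e_i-e_j)^\top$, hence $dv^\top T_2\,dv=\sum_{i,j}\lambda_{ij}(dv_i-dv_j)^2$.

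Next I would compute the off-diagonal and diagonal entries of $S_2-T_2$. Because $(S_2)_{ii}=(T_2)_{ii}$ and $S_2$ is diagonal, $S_2-T_2$ has zero diagonal and off-diagonal entries $(S_2-T_2)_{ij}=-(T_2)_{ij}$ for $i\ne j$. Reading these off from the decomposition of $T_2$ gives $(S_2-T_2)_{ij}=\lambda_{ij}+\lambda_{ji}$ for $i\ne j$. Substituting,
\begin{equation*}
dw^\top(S_2-T_2)\,dw=\sum_{i\ne j}(\lambda_{ij}+\lambda_{ji})\,dw_i\,dw_j
=2\sum_{i,j}\lambda_{ij}\,dw_i\,dw_j,
\end{equation*}
where the last equality just relabels indices in the $\lambda_{ji}$ contribution.

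Combining the two pieces yields
\begin{equation*}
\bmtx dv\\ dw\emtx^\top \bmtx T_2 & 0\\ 0 & S_2-T_2\emtx \bmtx dv\\ dw\emtx
=\sum_{i,j}\lambda_{ij}\bigl[(dv_i-dv_j)^2+2\,dw_i\,dw_j\bigr].
\end{equation*}
Each bracket is exactly the scalar inequality \eqref{eq:ScalarIncQC} of Proposition~\ref{prop:ScalarIncQC} applied to the pairs $(\bar v_i,\hat v_i)$ and $(\bar v_j,\hat v_j)$, so every bracket is non-negative. Since $\lambda_{ij}\ge 0$, the whole sum is non-negative, proving $\Phi\in QC^{inc}(\Miset{2})$. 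The main obstacle I anticipate is purely bookkeeping: keeping the symmetric pairing $(i,j)\leftrightarrow(j,i)$ straight so that the cross-terms from $S_2-T_2$ line up exactly with the factor of $2$ in Proposition~\ref{prop:ScalarIncQC}; once that alignment is made explicit, the conclusion is immediate.
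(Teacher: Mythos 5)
Your proof is correct and follows essentially the same route as the paper: decompose $T_2$ via Lemma~\ref{lem:DHdecomp}, identify $S_2-T_2$ with the cross-terms $\sum_{i,j}\lambda_{ij}(e_ie_j^\top+e_je_i^\top)$, and reduce the quadratic form to a non-negative combination of the scalar terms in Proposition~\ref{prop:ScalarIncQC}. The only difference is cosmetic (you verify the entries of $S_2-T_2$ componentwise rather than writing the block decomposition of $M_2$ directly), so no further changes are needed.
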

\vspace{0.05in}
\begin{proof}
 Take any $M_2\in \Miset{2}$. The upper left block $T_2$ is a symmetric, doubly hyperdominant matrix with zero excess. By Lemma~\ref{lem:DHdecomp}, there exists 
 $\{ \lambda_{ij} \}_{i,j=1}^{n_v} \in \R_{\ge 0}$ such that
\begin{align}
   T_2 = \sum_{i,j=1}^{n_v} \lambda_{i,j} (e_i-e_j)(e_i-e_j)^\top.
\end{align}
This further implies that $S_2=\sum_{i,j=1}^{n_v} \lambda_{ij} ( e_i e_i^\top + e_j e_j^\top)$ and hence $S_2-T_2 = \sum_{i,j=1}^{n_v} \lambda_{ij} ( e_ie_j^\top + e_je_i^\top)$.   Therefore
$M_{2}$ can be decomposed as:
\begin{align}
  \label{eq:Mi2decomp}
  M_{2} = \sum_{i,j=1}^{n_v}  \lambda_{ij} 
  \bsmtx (e_i-e_j)(e_i-e_j)^\top & 0 \\
             0 & e_ie_j^\top + e_j e_i^\top \esmtx.
\end{align}
Then for any
$\bar{v}, \, \hat{v}\in \R^{n_v}$ and
$\bar{w}=\Phi(\bar{v})$, $\hat{w}=\Phi(\hat{v})$,
\begin{align*}
& \bmtx \bar{v}-\hat{v} \\ \bar{w}-\hat{w} \emtx^\top 
M_{2}
\bmtx \bar{v}-\hat{v} \\ \bar{w}-\hat{w} \emtx \\
&
= \sum_{i,j=1}^{n_v}  \lambda_{ij}
   \left[ \left( (\bar{v}_i-\hat{v}_i)
        - (\bar{v}_j-\hat{v}_j)
   \right)^2
   + 2 (\bar{w}_i-\hat{w}_i)
        \cdot (\bar{w}_j-\hat{w}_j)
    \right].
\end{align*}
Each term of this sum is nonnegative by property \eqref{eq:ScalarIncQC}
 of the scalar ReLU and $\lambda_{ij}\ge 0$.
Thus $\Phi$ satisfies all incremental QCs defined by $\Miset{2}$. 
\end{proof}
\vspace{0.1in}

ReLU satisfies the incremental QCs defined by  $\Miset{1}$ and $\Miset{2}$.  Numerical implementations may favor these subsets over the complete set $\Mic$ for computational reasons.


\section{Related Extensions}
\label{sec:relatedresults}

The key results in the previous sections can be generalized in various ways.  This section presents two specific extensions leading to the complete set of QCs for other nonlinear functions appearing in the literature.

\subsection{Affine Transformations}

First we consider the effect of affine transformations on QCs. The next result is stated for general functions and is not specific to  ReLU.

\vspace{0.1in}
\begin{lemma}    
\label{lem:AffineQC}
Let a function $F:\R^{n_v}\to \R^{n_w}$ and matrices $A_0\in \R^{n_w\times n_v}$, $A_1\in \R^{n_w\times n_w}$, and $A_2\in \R^{n_v\times n_w}$ be given. Define a new function $G:\R^{n_v}\to\R^{n_w}$ by:
\begin{align}
\label{eq:Gdef}
G(\hat{v}) = A_0 \hat{v}  
+ A_1 F( A_2 \hat{v} ).    
\end{align}
\begin{enumerate}
\item[(a)]  Given $M \in \Sym^{n_v+n_w}$.  If $A_1$ is nonsingular and $F$ satisfies the QC defined by $M$  then $G$ satisfies the QC defined by $\hat{M}:= R_A^\top M R_A\in \Sym^{n_v+n_w}$ with
\begin{align}
\label{eq:RAdef}
    R_A:= \bmtx A_2 & 0 \\ -A_1^{-1}A_0 & A_1^{-1} \emtx.
\end{align}

\item[(b)] Given $\hat{M} \in \Sym^{n_v+n_w}$. If $A_1$ and $A_2$ are nonsingular then $G$ satisfies the QC defined by $\hat{M}$ if and only if $F$ satisfies the QC defined by $M:= R_A^{-\top} \hat{M} R_A^{-1}$.
\end{enumerate}
\end{lemma}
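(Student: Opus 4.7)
The plan is to reduce both parts to a single change-of-variables identity that relates the input/output graphs of $F$ and $G$. Given $\hat{v}\in\R^{n_v}$, define $v:=A_2\hat{v}$, $w:=F(v)$, and $\hat{w}:=G(\hat{v})=A_0\hat{v}+A_1 w$. When $A_1$ is nonsingular, solving $\hat{w}=A_0\hat{v}+A_1 w$ for $w$ gives $w=-A_1^{-1}A_0\hat{v}+A_1^{-1}\hat{w}$. Together with $v=A_2\hat{v}$, this is precisely
\begin{align*}
\bmtx v \\ w \emtx = R_A \bmtx \hat{v} \\ \hat{w} \emtx,
\end{align*}
with $R_A$ as in \eqref{eq:RAdef}. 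This is the only algebraic identity the proof really needs; everything else is substitution.

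For part (a), I would take an arbitrary $\hat{v}\in\R^{n_v}$ and the corresponding $\hat{w}=G(\hat{v})$, construct $(v,w)$ as above, and then compute
\begin{align*}
\bmtx \hat{v} \\ \hat{w} \emtx^\top \hat{M} \bmtx \hat{v} \\ \hat{w} \emtx
= \bmtx \hat{v} \\ \hat{w} \emtx^\top R_A^\top M R_A \bmtx \hat{v} \\ \hat{w} \emtx
= \bmtx v \\ w \emtx^\top M \bmtx v \\ w \emtx \ge 0,
\end{align*}
where the last inequality uses that $F$ satisfies the QC defined by $M$ at the point $v$ with $w=F(v)$. Since $\hat{v}$ was arbitrary, $G\in QC(\hat{M})$.

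For part (b), note that when both $A_1$ and $A_2$ are nonsingular, $R_A$ is block lower-triangular with invertible diagonal blocks and hence invertible, so $\hat{M}=R_A^\top M R_A$ can be inverted to give $M=R_A^{-\top}\hat{M}R_A^{-1}$. The forward direction ($F\in QC(M)\Rightarrow G\in QC(\hat{M})$) is exactly part (a). For the converse, I would start from an arbitrary $v\in\R^{n_v}$ with $w=F(v)$ and define $\hat{v}:=A_2^{-1}v$, $\hat{w}:=A_0\hat{v}+A_1 w$. A direct check shows $\hat{w}=G(\hat{v})$, and by construction $\bsmtx v \\ w \esmtx = R_A \bsmtx \hat{v} \\ \hat{w} \esmtx$, so the same quadratic-form manipulation in the opposite direction converts the QC for $G$ at $(\hat{v},\hat{w})$ into the QC for $F$ at $(v,w)$.

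I do not anticipate any genuine obstacle: the argument is just the observation that a congruence transformation of the middle matrix is dual to an affine bijection between graphs. The only subtle point worth flagging is the converse in (b), where one must verify that the map $v\mapsto\hat{v}=A_2^{-1}v$ produces a $\hat{v}$ whose $G$-image is exactly the $\hat{w}$ defined by the algebraic formula; this uses the nonsingularity of $A_2$ to surject onto the graph of $G$ as $v$ ranges over $\R^{n_v}$. Without this surjectivity (which is why $A_2$ must be invertible in (b) but not in (a)), QC satisfaction on the graph of $G$ could not be pushed back to the entire graph of $F$.
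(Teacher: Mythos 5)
Your proposal is correct and follows essentially the same route as the paper: the identity $\bsmtx v \\ w \esmtx = R_A \bsmtx \hat{v} \\ \hat{w} \esmtx$ you isolate is exactly the substitution the paper performs in both directions, and your converse in (b) (pulling back via $\hat{v}=A_2^{-1}v$ and verifying $\hat{w}=G(\hat{v})$) matches the paper's use of $A_2^{-1}$ to range over the graph of $F$. Your closing remark about why invertibility of $A_2$ is needed only in (b) is the right observation and consistent with the paper's treatment.
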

\vspace{0.05in}
\begin{proof}

\noindent
(a) $F$ satisfies the QC defined by $M$ and hence,
\begin{align}
\label{eq:FQCforAffine}
\bmtx A_2 \hat{v} \\ F( A_2 \hat{v} )\emtx^\top
M
\bmtx A_2 \hat{v} \\ F( A_2 \hat{v} )\emtx
\ge 0
\,\,\,\, \forall \hat{v} \in \R^{n_v}.
\end{align}
Next, rearranging \eqref{eq:Gdef} we get
\begin{align}
    F(A_2\hat{v}) = A_1^{-1} \left(
     G(\hat{v})- A_0 \hat{v} \right).
\end{align}
Substitute this expression into \eqref{eq:FQCforAffine} to obtain
\begin{align}
\label{eq:GQCforAffine}
\bmtx \hat{v} \\ G( \hat{v} )\emtx^\top
R_A^\top  M R_A
\bmtx \hat{v} \\ G(\hat{v} )\emtx
\ge 0
\,\,\,\, \forall \hat{v} \in \R^{n_v}.
\end{align}
Hence $G$ satisfies the QC defined by $\hat{M}$ as required.

\vspace{0.1in}
\noindent
(b)  First, note that if $A_1$ and $A_2$ are nonsingular then $R_A$ is nonsingular with the following inverse:
\begin{align}
    R_A^{-1} = \bmtx A_2^{-1} & 0 \\
        A_0 A_2^{-1}  & A_1 \emtx.
\end{align}
It follows from (a) that if $F$ satisfies the QC defined by $M=R_A^{-\top} \hat{M} R_A^{-1}$ then $G$ satisfies the QC defined by $R_A^\top M R_A=\hat{M}$.  Thus it remains to show the ``only if" direction. Assume $G$ satisfies the QC defined by $\hat{M}$. Since $A_2$ is nonsingular, this implies:
\begin{align}
\label{eq:GQCforAffine2}
\bmtx A_2^{-1} v \\ G( A_2^{-1} v )\emtx^\top
\hat{M}
\bmtx A_2^{-1} v \\ G( A_2^{-1} v )\emtx
\ge 0
\,\,\,\, \forall v \in \R^{n_v}.
\end{align}
Substitute for $G(A_2^{-1} v)$ using the definition of $G$ in \eqref{eq:Gdef}. Then
the QC \eqref{eq:GQCforAffine2} simplifies to:
\begin{align}
\bmtx v \\ F( v )\emtx^\top
R_A^{-\top} \hat{M} R_A^{-1}
\bmtx v \\ F( v )\emtx
\ge 0
\,\,\,\, \forall v \in \R^{n_v}.
\end{align}
Hence $F$ satisfies the QC defined by $M
=R_A^{-\top} \hat{M} R_A^{-1}$.
\end{proof}
\vspace{0.1in}

We can use affine transformations to give the complete set of QCs for another class of  functions. Define $g_{\alpha\beta}: \R \to \R$ for $\alpha\ne \beta$ as follows:
\begin{align}
g_{\alpha\beta}(v)= \left\{
  \begin{array}{ll}
    \alpha v & \mbox{if } v < 0 \\
    \beta v & \mbox{if } v \geq 0 
  \end{array} 
  \right. .
\end{align}
$g_{\alpha\beta}$ corresponds to the scalar ReLU $\phi$ when $\alpha=0$ and $\beta=1$.  It corresponds to leaky ReLU \cite{maas2013rectifier} when $0<\alpha<1$
and $\beta=1$.  In general, $g_{\alpha\beta}$ is a piecewise linear function with a slope change at $v=0$.
The corresponding repeated function  
$G_{\alpha\beta}:\R^{n_v} \to \R^{n_v}$  is defined elementwise by
$G_{\alpha\beta}(v)= (g_{\alpha\beta}(v_1), \ldots , \, g_{\alpha\beta}(v_{n_v}))$.

Corollary~\ref{cor:QCsforGab} below states that the complete
set of QCs for $G_{\alpha\beta}$ is defined
by the following set of matrices:

{\small
\begin{align}
\Mset{\alpha\beta} & := \left\{ \hat{M}\in \Sym^{2n_v} \, : \, \right. \\
\nonumber
& \left.
\bsmtx D \\ \alpha D + \frac{\beta-\alpha}{2}(I+D) \esmtx^\top 
\hat{M}
\bsmtx D \\ \alpha D + \frac{\beta-\alpha}{2}(I+D) \esmtx \in COP^{n_v}  
\,\, \forall D \in \Dpm^{n_v}
\right\}.
\end{align}
}

\vspace{0.1in}
\begin{corollary}    
\label{cor:QCsforGab}
    The function $G_{\alpha\beta}:\R^{n_v} \to \R^{n_v}$ with $\alpha \ne \beta$
    satisfies the QC defined by $\hat{M}$ if and only if $\hat{M}\in \Mset{\alpha\beta}$. 
\end{corollary}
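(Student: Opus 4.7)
The plan is to reduce the corollary to Theorem~\ref{thm:ReLUQCs1} via the affine transformation result in Lemma~\ref{lem:AffineQC}. The starting point is the elementary identity $g_{\alpha\beta}(v) = \alpha v + (\beta-\alpha)\phi(v)$ for all $v\in\R$, which is verified by checking $v<0$ and $v\ge 0$ separately. Applied componentwise, this lifts to $G_{\alpha\beta}(v) = \alpha v + (\beta-\alpha)\Phi(v)$. Setting $A_0 = \alpha I$, $A_1 = (\beta-\alpha) I$, and $A_2 = I$ in the notation of Lemma~\ref{lem:AffineQC} writes $G_{\alpha\beta}$ as $A_0 \hat v + A_1 \Phi(A_2 \hat v)$; the hypothesis $\alpha\ne\beta$ makes $A_1$ nonsingular (and $A_2$ is trivially nonsingular), so part (b) of that lemma applies.

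By Lemma~\ref{lem:AffineQC}(b), $G_{\alpha\beta}$ satisfies the QC defined by $\hat M$ if and only if $\Phi$ satisfies the QC defined by $M := R_A^{-\top} \hat M R_A^{-1}$, with $R_A^{-1} = \bsmtx I & 0 \\ \alpha I & (\beta-\alpha) I \esmtx$ by direct inversion. By Theorem~\ref{thm:ReLUQCs1}, the latter is equivalent to $M\in\Mc$, i.e.\ to copositivity of $\bsmtx D \\ \frac{1}{2}(I+D) \esmtx^\top M \bsmtx D \\ \frac{1}{2}(I+D) \esmtx$ for every $D\in\Dpm^{n_v}$. Substituting $M = R_A^{-\top} \hat M R_A^{-1}$ and using the one-line computation $R_A^{-1} \bsmtx D \\ \frac{1}{2}(I+D) \esmtx = \bsmtx D \\ \alpha D + \frac{\beta-\alpha}{2}(I+D) \esmtx$ converts this family of copositivity conditions into exactly the defining conditions of $\Mset{\alpha\beta}$. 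Chaining the two equivalences then gives $G_{\alpha\beta}\in QC(\hat M)$ iff $\hat M\in\Mset{\alpha\beta}$.

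The argument is largely bookkeeping, since the substantive work is already carried in Lemma~\ref{lem:AffineQC} and Theorem~\ref{thm:ReLUQCs1}. There is no real obstacle to overcome; the only step where care is needed is verifying that the transformed selector $R_A^{-1} \bsmtx D \\ \frac{1}{2}(I+D) \esmtx$ lines up with the selector appearing inside $\Mset{\alpha\beta}$ so that no spurious scaling factors appear. This is a short calculation that pins down why the particular combination $\alpha D + \tfrac{\beta-\alpha}{2}(I+D)$ shows up in the corollary's statement, and it is the only place the form of $\Mset{\alpha\beta}$ is really being determined.
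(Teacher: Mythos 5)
Your proposal is correct and follows the same route as the paper: express $G_{\alpha\beta}(v) = \alpha v + (\beta-\alpha)\Phi(v)$, invoke Lemma~\ref{lem:AffineQC}(b) with $A_0=\alpha I$, $A_1=(\beta-\alpha)I$, $A_2=I$, and then translate the copositivity conditions of Theorem~\ref{thm:ReLUQCs1} through $R_A^{-1}$ to obtain exactly the defining conditions of $\Mset{\alpha\beta}$. Your explicit verification that $R_A^{-1}\bsmtx D \\ \frac{1}{2}(I+D)\esmtx = \bsmtx D \\ \alpha D + \frac{\beta-\alpha}{2}(I+D)\esmtx$ spells out the final simplification the paper leaves implicit.
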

\begin{proof} $G_{\alpha\beta}$ can be written in terms of the  ReLU as follows:
\begin{align}
G_{\alpha\beta}(v) =  \alpha v + (\beta-\alpha) \Phi(v).
\end{align}
This corresponds to an affine transformation
as in \eqref{eq:Gdef} with $A_0 = \alpha I$, $A_1 = (\beta-\alpha) I$, and $A_2 = I$.  Both $A_1$ and $A_2$ are nonsingular and thus we have:
\begin{align}
R_A = \bmtx I & 0 \\ -\frac{\alpha}{\beta-\alpha} I &  \frac{1}{\beta-\alpha} I \emtx \mbox{ and } 
R_A^{-1}  = \bmtx I & 0 \\  \alpha I &  (\beta-\alpha) I \emtx.     
\end{align}

It follows from Lemma~\ref{lem:AffineQC}(b) that $G$ satisfies the QC defined by $\hat{M}$ if and only if 
 ReLU $\Phi$ satisfies the QC defined by $M=R_A^{-\top} \hat{M} R_A^{-1}$. 
Moreover, Theorem~\ref{thm:ReLUQCs1} states that $\Phi$ satisfies a QC defined by $M$ if and only if $M \in \Mc$. Combining these equivalences, $G$  satisfies the QC defined by $\hat{M}$ if and only if
\begin{align}
\label{eq:MabCond}
\bmtx D \\ \frac{1}{2} (I+D) \emtx^\top 
R_A^{-\top} \hat{M} R_A^{-1}
\bmtx D \\ \frac{1}{2} (I+D) \emtx \in COP^{n_v}
\,\, \forall D \in \Dpm^{n_v}.
\end{align}
This simplifies to $\hat{M}\in \Mset{\alpha\beta}$.
\end{proof}
\vspace{0.1in}

\subsection{Application to Householder and Max/Min Activations}

Several gradient norm preserving activation functions have appeared in the literature
for the design of Lipschitz neural networks 
\cite{tanielian2021approximating,anil2019sorting}. One example is the Householder activation function \cite{singla2022improved}.
Given a vector $h\in \R^{n_v}$ with $\|h\|_2=1$, the Householder activation $G_h:\R^{n_v}\to \R^{n_v}$ is defined by:
\begin{align}
\label{eq:Householder}
G_h(v)= \left\{
  \begin{array}{ll}
    v & \mbox{if } h^\top v \ge 0 \\
    (I-2hh^\top) v & \mbox{if } h^\top v < 0 
  \end{array} 
  \right. .
\end{align}
As one example, if $h=\frac{1}{\sqrt{2}} \bsmtx 1 & -1 \esmtx^\top$ then  $G_h(v) = \bsmtx \max(v_1,v_2) \\ \min(v_1,v_2) \esmtx$. This special case of the Householder activation is called the MaxMin activation \cite{tanielian2021approximating,anil2019sorting}.   A more general "groupwise" Householder and MaxMin activation functions are used in~\cite{tanielian2021approximating,anil2019sorting,singla2022improved} but we will focus on the (single group) Householder in \eqref{eq:Householder} for simplicity.

The Householder activation function can be expressed with an affine transformation on  ReLU.  Let $1_{n_v}\in \R^{n_v}$ be the vector of all ones and $N\in \R^{n_v \times n_v}$ be any matrix with $N 1_{n_v}=0$. Then $G_h$ can be written as:
\begin{align}
\label{eq:HouseholderAffine}
    G_h(x) = v + \left(\frac{2}{n_v} h 1_{n_v}^\top + N \right) \,
    \Phi\left( -(h^\top v) \cdot 1_{n_v} \right).
\end{align}
Equation~\ref{eq:HouseholderAffine} corresponds to an affine transformation as in \eqref{eq:Gdef} with $A_0 = I$, $A_1 = \frac{2}{n_v} h 1_{n_v}^\top + N$, and $A_2 = -1_{n_v}h^\top$. This transformation is not unique due to the choice of $N$. A similar affine transformation appeared in \cite{pauli2024novel} for the MaxMin activation.

It is possible to choose $N$ such that $A_1$ is nonsingular.  By Lemma~\ref{lem:AffineQC}(a),  if  ReLU $\Phi$ satisfies a QC defined by $M$ then the Householder activation $G_h$ satisfies the QC defined by $\hat{M}:= R_A^\top M R_A$
with $R_A$ defined in \eqref{eq:RAdef}. Thus we can use the complete set of QCs for  ReLU to define QCs for $G_h$. However, $A_2$ is singular in this affine transformation. Hence Lemma~\ref{lem:AffineQC}(b) does not apply, i.e. we cannot equivalently map between QCs for $\Phi$ and $G_h$.  Therefore, the complete set of QCs for $G_h$ are not necessarily constructed from the complete set for $\Phi$.


We can provide a direct construction for the complete set of QCs of the Householder activation $G_h$.  This uses a similar technique as used to define $\Mc$ for  ReLU. First, define the following set of matrices:
\begin{align}
\label{eq:Mh}
\Mset{h} & := \left\{ \hat{M}\in \Sym^{2n_v} \, : \, \right. \\
\nonumber
& \hspace{0.3in} \left.
\bsmtx I \\ I + (d-1) h h^\top \esmtx^\top 
\hat{M}
\bsmtx I \\ I + (d-1) h h^\top \esmtx
\succeq 0
\,\, \mbox{ for } d=\pm 1.
\right\}.
\end{align}

Note that each matrix in $\Mset{h}$ satisfies only two positive semidefiniteness conditions; no copositivity conditions arise in this case.  The next theorem states that $\Mset{h}$ defines the complete set of QCs for the Householder activation $G_h$.

\vspace{0.1in}
\begin{theorem}
\label{thm:HouseholderQCs}
    Let $h\in \R^{n_v}$ be given with $\|h\|_2=1$ and suppose $\hat{M} \in \Sym^{2n_v}$. The Householder activation $G_h\in {QC}(\hat{M})$  if and only if $\hat{M} \in \Mset{h}$.
\end{theorem}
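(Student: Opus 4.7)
The plan is to leverage the fact that the Householder activation is piecewise linear with only two linear pieces that are related by the symmetry $v\mapsto -v$, so the QC characterization collapses to two simple PSD conditions (rather than the $2^{n_v}$ copositivity conditions that arise for ReLU). Specifically, for $d\in\{+1,-1\}$ let
\begin{align*}
A_d := I + (d-1)\, h h^\top,
\quad
\mbox{so that}\quad
A_{+1} = I, \quad A_{-1} = I - 2 h h^\top .
\end{align*}
Then $G_h(v) = A_{d} v$ with $d = \mathrm{sign}(h^\top v)$, and because $A_{+1} v = A_{-1} v$ on the hyperplane $h^\top v = 0$, the boundary case is unambiguous. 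Denote $M_d := \bsmtx I \\ A_d \esmtx^\top \hat M \bsmtx I \\ A_d \esmtx$, so $\hat M \in \Mset{h}$ is exactly the requirement that $M_{+1}\succeq 0$ and $M_{-1}\succeq 0$.

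For the sufficiency direction ($\Leftarrow$), I would take any $v\in\R^{n_v}$, set $d=\mathrm{sign}(h^\top v)$, and write $\bsmtx v \\ G_h(v)\esmtx = \bsmtx I \\ A_d \esmtx v$. The QC integrand then becomes $v^\top M_d v$, which is nonnegative by the PSD condition imposed on $M_d$. Hence $G_h\in QC(\hat M)$.

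For the necessity direction ($\Rightarrow$), I would fix $d\in\{+1,-1\}$ and show that $M_d\succeq 0$ by evaluating the QC on the half-space where $G_h$ equals $A_d v$, then using evenness of quadratic forms to extend the inequality to all of $\R^{n_v}$. Given arbitrary $v$, choose $\sigma\in\{+1,-1\}$ so that $\mathrm{sign}(h^\top(\sigma v)) = d$ (which is always possible: one of $v$ or $-v$ lies in the appropriate closed half-space, with boundary points handled by the agreement of $A_{+1}$ and $A_{-1}$ on $h^\top v = 0$). Applying the QC assumption at the point $\sigma v$ gives
\begin{align*}
0 \le \bmtx \sigma v \\ G_h(\sigma v) \emtx^\top \hat M \bmtx \sigma v \\ G_h(\sigma v) \emtx
= (\sigma v)^\top M_d (\sigma v) = v^\top M_d v,
\end{align*}
where the last equality uses the evenness of the quadratic form in $v$. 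Since $v$ was arbitrary, $M_d\succeq 0$.

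I do not anticipate any major obstacle; the proof is direct once the two-piece structure and the $v\mapsto -v$ symmetry of $G_h$ are noted. The only subtlety is the careful handling of the boundary $h^\top v = 0$ and confirming that the conditions reduce to PSD rather than copositivity --- this reduction is exactly what distinguishes the Householder case from the ReLU case, and it is ultimately because each linear piece of $G_h$ is defined on a half-space that is invariant under $v\mapsto -v$ (up to swapping pieces), whereas ReLU's pieces live on orthants that are not sign-symmetric.
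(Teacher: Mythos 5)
Your proof is correct and takes essentially the same approach as the paper's: the sufficiency direction is identical, and your direct proof of necessity (choosing the sign of $v$ so that $\sigma v$ lands in the half-space where $G_h$ equals $A_d$, invoking evenness of the quadratic form, and handling the boundary $h^\top v=0$ via the agreement of the two linear pieces) is simply the contrapositive form of the argument given in the paper.
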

\begin{proof}

\noindent
($\Leftarrow$) Suppose $\hat{M}\in \Mset{h}$. Take any $v\in \R^{n_v}$ and let $w=G_h(v)$.
Define $d=$sign$(h^\top v)$ so that  $w = ( I + (d-1) h h^\top ) v$. This gives:
\begin{align*}
\bmtx v \\ w \emtx^\top 
\hat{M}
\bmtx v \\ w \emtx 
=
v^\top 
\bmtx I \\ I + (d-1) h h^\top \emtx^\top 
\hat{M}
\bmtx I \\ I + (d-1) h h^\top \emtx
v. 
\end{align*}
The expression above is nonnegative since $\hat{M}\in\Mset{h}$. 
Since $v$ was arbitrary the above holds for all $v\in \R^{n_v}$ and $w=G_h(v)$. Hence the Householder activation $G_h$  satisfies the QC defined by $M \in \Mset{h}$ as required.

\vspace{0.1in}
\noindent
($\Rightarrow$) This direction is by contrapositive.  Assume $\hat{M}\notin \Mset{h}$.   There exists $d=+ 1$ or $-1$ and a vector $v\in \R^{n_v}$ such that:
\begin{align}
\label{eq:MhFails}
v^\top \bmtx I \\ I + (d-1) h h^\top \emtx^\top 
\hat{M}
\bmtx I \\ I + (d-1) h h^\top \emtx
v < 0.
\end{align}
Equation~\ref{eq:MhFails} holds if we use either $+ v$ or $-v$. Select the sign of $v$ so that $d=$sign$( h^\top v)$.\footnote{If $h^\top v=0$ then the term $(d-1)hh^\top v$ in
\eqref{eq:MhFails} is zero.  Hence \eqref{eq:MhFails} holds for both $d=+1$ and $d=-1$. In this case we can assume $d=+1$ without loss of generality.}  As a consequence,
\begin{align*}
  (I+(d-1)hh^\top) v = 
  \left\{
  \begin{array}{ll}
    v & \mbox{if } h^\top v \ge 0 \\
    (I-2hh^\top) v & \mbox{if } h^\top v < 0 
  \end{array} 
  \right. .
\end{align*}
Hence this term is equal to $G_h(v)$ and Equation~\ref{eq:MhFails} can be written as:
\begin{align}
\bmtx v \\ G_h(v) \emtx^\top 
\hat{M}
\bmtx v \\ G_h(v) \emtx 
< 0.
\end{align}
Thus the Householder activation does not satisfy the QC defined by $\hat{M}$.
\end{proof}
\vspace{0.1in}

Theorem~\ref{thm:HouseholderQCs} states that
$\Mset{h}$ defines the complete set of QCs satisfied by the Householder activation. This
also gives the complete set of QCs for the MaxMin activation as a special case when $h=\frac{1}{\sqrt{2}}\bsmtx 1 & -1 \esmtx^\top$.
We conjecture that a similar method can be used to construct the complete set of QCs for the more general "groupwise" versions of the Householder and MaxMin activations in \cite{tanielian2021approximating,anil2019sorting,singla2022improved,pauli2024novel}.

\section{Stability Analysis With ReLU QCs}
\label{sec:QCstab}

This section illustrates the use of the complete set of  ReLU QCs for stability analysis.  The stability analysis conditions are standard but the section describes numerical issues specific to this complete set. This combines our new results with a number of ideas from the literature and thus provides tutorial value.

\subsection{Dynamic Systems With ReLU Activation Functions}

Consider the interconnection shown in
Figure~\ref{fig:LFTdiagram}  with a  ReLU $\Phi$ wrapped in feedback around the top channels of a nominal system $G$.  This interconnection is denoted as $F_U(G,\Phi)$.  The nominal part $G$ is a discrete-time, linear time-invariant (LTI) system described by the following state-space model:
\begin{align}
  \label{eq:LTInom}
  \begin{split}
    & x(k+1) = A\, x(k) + B_1\,w(k) +  B_2\, d(k) \\
    & v(k)=C_{1}\,x(k)+D_{11}\, w(k)+ D_{12} \,d(k)\\
    & e(k)=C_{2}\,x(k)+D_{21}\, w(k)+ D_{22}\,d(k),
  \end{split}
\end{align}
where $x \in \R^{n_x}$ is the state. The inputs are $w\in \R^{n_w}$ and $d\in \R^{n_d}$ while $v\in \R^{n_v}$ and $e\in \R^{n_e}$ are outputs.   The interconnection $F_U(G,\Phi)$ is known as a linear fractional transformation (LFT) in the robust control literature \cite{zhou96}. The interconnection has its roots in the Lurye decomposition used in the absolute stability problem \cite{khalil01}. Recurrent Neural Networks (RNNs) with ReLU activation functions can be decomposed into this form.

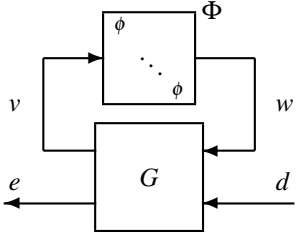
\begin{figure}[h!t]
\centering
\begin{picture}(110,90)(40,20)
 \thicklines
 \put(75,25){\framebox(40,40){$G$}}
 \put(143,40){$d$}
 \put(150,35){\vector(-1,0){35}}  
 \put(42,40){$e$}
 \put(75,35){\vector(-1,0){35}}  
    \put(78,73){\framebox(34,34){
    $\begin{smallmatrix}
    \phi & &  \\ & \,\, \ddots  & \\ 
    & & \phi 
    \end{smallmatrix}$
    }}
 \put(115,105){$\Phi$}
 \put(42,70){$v$}
 \put(55,55){\line(1,0){20}}  
 \put(55,55){\line(0,1){35}}  
 \put(55,90){\vector(1,0){23}}  
 \put(143,70){$w$}
 \put(135,90){\line(-1,0){23}}  
 \put(135,55){\line(0,1){35}}  
 \put(135,55){\vector(-1,0){20}}  
\end{picture}
\caption{Interconnection $F_U(G,\Phi)$ of a nominal discrete-time LTI system $G$ and  ReLU $\Phi$. }
\label{fig:LFTdiagram}
\end{figure}

This feedback interconnection involves an implicit equation if $D_{11}\ne 0$.  Specifically, the second equation in \eqref{eq:LTInom} combined with $w(k)=\Phi( v(k) )$ yields:
\begin{align}
   \label{eq:WellPosed}
   v(k)=C_{1}\,x(k)+D_{11}\, \Phi( v(k) )+ D_{12} \,d(k).
\end{align}
This equation is \emph{well-posed} if there exists a unique solution $v(k)$ for all  values of $x(k)$ and $d(k)$.  Well-posedness of this equation implies that the dynamic system $F_U(G,\Phi)$ is well-posed in the following sense:  for all
initial conditions $x(0)\in\R^{n_x}$ and  inputs $d\in \ell_2^{n_d}$ there exists unique solutions $x$, $v$, $w$ and $e$ in $\ell_2$ to the system $F_U(G,\Phi)$. There are simple sufficient conditions for well-posedness of \eqref{eq:WellPosed}, e.g. Lemma 1 in \cite{richardson23} (which relies on results in \cite{valmorbida18,zaccarian02}). Thus, we'll assume well-posed for simplicity in the remainder of the paper.

A well-posed interconnection $F_U(G,\Phi)$ is \emph{internally stable} if $x(k)\to 0$ from any initial condition $x(0)$ with $d(k)=0$ for $k\in \N$. In other words, $F_U(G,\Phi)$ is internally stable if $x=0$ is a globally asymptotically stable equilibrium point with no external input.  A well-posed interconnection $F_U(G,\Phi)$ has \emph{finite induced-$\ell_2$ gain} if there exists  $\gamma<\infty$ such that the output $e$ generated by any $d\in \ell_2^{n_d}$ with $x(0)=0$ satisfies $\|e\|_2 \le \gamma \, \|d\|_2$.  We denote the infimum of all such $\gamma$ by $\|F_U(G,\Phi)\|_{2 \to 2}$.


\subsection{Stability and Performance Condition}

We next state a stability and performance condition for $F_U(G,\Phi)$ using the complete set of ReLU QCs defined in Section~\ref{sec:CompleteQCs}.

\vspace{0.1in}
\begin{theorem}    
    \label{thm:StabPerfConf}
Consider the system $F_U(G,\Phi)$ 
with the LTI system $G$ defined in \eqref{eq:LTInom}
and $\Phi:\R^{n_v}\to \R_{\ge 0}^{n_v}$ a  ReLU. Assume $F_U(G,\Phi)$ is well-posed. Also assume there exists a $2n_v\times 2n_v$ matrix $M\in \Mc$, scalar $\gamma > 0$,
and $P\in \Sym^{n_x}$ with $P\succeq 0$ such that $LMI(P,M,\gamma^2)\prec 0$ where:
\begin{align}
\nonumber
& LMI(P,M,\gamma^2) := 
\bmtx A^\top P A-P  & A^\top P B_1 
  &  A^\top P B_2 \\ 
  B_1^\top P A & B_1^\top PB_1 
  & B_1^\top P B_2 \\
  B_2^\top P A & B_2^\top P B_1 
& B_2^\top P B_2-\gamma^2 I\emtx \\
\label{eq:LMI}
& \hspace{0.3in}
+ \bmtx C_2^\top \\ D_{21}^\top \\ D_{22}^\top\emtx
  \bmtx C_2^\top \\ D_{21}^\top \\ D_{22}^\top \emtx^\top
+  \bmtx C_1^\top & 0 \\ D_{11}^\top & I  \\ D_{12}^\top & 0 \emtx
M
  \bmtx C_1^\top & 0 \\ D_{11}^\top & I  \\ D_{12}^\top & 0 \emtx^\top.
\end{align}
Then $F_U(G,\Phi)$ is internally stable and
$\|F_U(G,\Phi)\|_{2\to 2} < \gamma$.
\end{theorem}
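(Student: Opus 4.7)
The plan is a standard dissipativity argument tailored to the complete QC set $\Mc$. I will use the quadratic storage function $V(x):=x^\top P x$, extract a one-step dissipation inequality from the strict LMI, and conclude internal stability and the induced $\ell_2$ gain bound separately.

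First I would pre- and post-multiply $LMI(P,M,\gamma^2)\prec 0$ by the vector $\xi(k):=[x(k)^\top,w(k)^\top,d(k)^\top]^\top$ and its transpose, and use the state-space equations in \eqref{eq:LTInom} together with $w(k)=\Phi(v(k))$ and the well-posedness assumption. The top block in the first matrix of \eqref{eq:LMI} produces $V(x(k+1))-V(x(k))$; the rank-one term built from $[C_2^\top,D_{21}^\top,D_{22}^\top]^\top$ produces $e(k)^\top e(k)$; the $-\gamma^2 I$ entry produces $-\gamma^2 d(k)^\top d(k)$; and the last term produces exactly the QC form $[v(k)^\top,w(k)^\top] M [v(k)^\top,w(k)^\top]^\top$ after using the definition of $v(k)$. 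Since $M\in\Mc$, Theorem~\ref{thm:ReLUQCs1} gives that this last term is non-negative along trajectories with $w(k)=\Phi(v(k))$.

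Because the LMI is \emph{strict}, there exists $\epsilon>0$ such that
\begin{align*}
LMI(P,M,\gamma^2) + \epsilon \bmtx I & 0 & 0 \\ 0 & 0 & 0 \\ 0 & 0 & I \emtx \preceq 0.
\end{align*}
Applying the same quadratic-form manipulation and dropping the QC term (which is non-negative) yields the dissipation inequality
\begin{align*}
V(x(k+1)) - V(x(k)) + e(k)^\top e(k) - (\gamma^2-\epsilon)\, d(k)^\top d(k) + \epsilon\, x(k)^\top x(k) \le 0,
\end{align*}
valid for all $k\in\N$. For internal stability, set $d\equiv 0$: dropping the non-negative $e^\top e$ term gives $V(x(k+1))-V(x(k))\le -\epsilon\,\|x(k)\|^2$. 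Telescoping and using $V\ge 0$ shows $\sum_{k=0}^\infty \|x(k)\|^2<\infty$, which forces $x(k)\to 0$; this is how the strictness lets me bypass the fact that $P$ is only PSD rather than PD. For the gain bound, take $x(0)=0$ and $d\in\ell_2^{n_d}$; summing the dissipation inequality from $k=0$ to $\infty$, using $V(x(0))=0$ and $V(x(k))\ge 0$, and dropping the $\epsilon\|x\|^2$ term yields $\|e\|_2^2 \le (\gamma^2-\epsilon)\,\|d\|_2^2$, hence $\|F_U(G,\Phi)\|_{2\to 2}<\gamma$.

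The one subtlety I anticipate is justifying that all signals live in the spaces required by the argument: well-posedness gives $x,v,w,e\in\ell_2$ for $d\in\ell_2^{n_d}$, so the telescoping sums and limits are all legitimate; under $d\equiv 0$ the dissipation inequality alone already delivers $\ell_2$-summability of $x$, so no extra hypothesis is needed. The rest is routine bookkeeping of the block entries of \eqref{eq:LMI}.
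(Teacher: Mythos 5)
Your proof is correct and follows essentially the same dissipation argument as the paper's. The only difference is minor: the paper perturbs the storage to $V(x)=x^\top(P+\epsilon I)x$ so it can invoke a standard Lyapunov theorem for global asymptotic stability, whereas you keep $V(x)=x^\top P x$ with $P\succeq 0$ and deduce $x(k)\to 0$ directly from the telescoped bound $\epsilon\sum_{k}\|x(k)\|^2\le V(x(0))$ --- both routes are valid.
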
    
\begin{proof}
This theorem is a standard dissipation result~\cite{schaft99,
willems72a, willems72b, khalil01,hu17}. A proof, similar to the one given in \cite{noori24ReLURNN}, is given for completeness.

The LMI is strictly feasible by assumption and hence it remains feasible under small perturbations: $LMI(P+\epsilon I,M,\gamma^2) + \epsilon I\prec 0$  for some sufficiently small $\epsilon >0$. The
system $F_U(G,\Phi)$ is well-posed by assumption. Hence it has a unique causal solution $(x, w, v, e)$ (all in $\ell_2$) for any given initial condition $x(0)$ and input $d \in \ell_2^{n_d}$.   

Define a storage function by $V\left(x\right) := x^\top(P+\epsilon I) x$.  Left and right
multiply the perturbed LMI by $[x(k)^\top, \, w(k)^\top, \, d(k)^\top]$ and its transpose.  The result, applying the  dynamics~\eqref{eq:LTInom}, gives the following condition:
\begin{align*}
   & V\left(x(k+1)\right)  - V\left(x(k)\right) 
    -\gamma^2  d(k)^\top d(k)
   +  e(k)^\top e(k)  
    \\
& + \bmtx v(k) \\ w(k) \emtx^\top
    M \bmtx v(k) \\ w(k) \emtx 
    + \epsilon \bsmtx x(k) \\ w(k) \\d(k) \esmtx^\top \bsmtx x(k) \\ w(k) \\d(k) \esmtx
    \le 0.
\end{align*}
$M\in \Mc$ defines a valid QC for  ReLU by
Theorem~\ref{thm:ReLUQCs1}. Hence the term involving $M$ is non-negative.  Thus the dissipation inequality simplifies to:
\begin{align}
\label{eq:DI}
\begin{split}
   & V\left(x(k+1)\right)-V\left(x(k)\right) +  e(k)^\top e(k)  \\
   & \le  (\gamma^2-\epsilon) d(k)^\top d(k)  - \epsilon x(k)^\top x(k).
\end{split}  
\end{align}
Internal stability and the $\ell_2$ gain bound for $F_U(G,\Phi)$ follow from this inequality.  Specifically, if $d(k)=0$ for all $k$ then \eqref{eq:DI} implies the following Lyapunov inequality:
\begin{align*}
    V(x(k+1)) - V(x(k)) \le -\epsilon x(k)^\top x(k).
\end{align*}
Hence $V$ is a Lyapunov function and the system is globally asymptotically stable (Theorem 27 in Section 5.9 of \cite{vidyasagar02}).

Next, assume $x(0)=0$ and $d \in \ell_2^{n_d}$. Summing \eqref{eq:DI} from $k=0$ to $k=T-1$ and using $V(x(0))=0$ yields:
\begin{align*}
   V\left(x(T)\right) +  \sum_{k=0}^{T-1} e(k)^\top e(k)  \le \sum_{k=0}^{T-1}  (\gamma^2-\epsilon) d(k)^\top d(k). 
\end{align*}
Note that $V(x(T))\ge 0$ because $P$ is positive semidefinite. Moreover, the right side is upper bounded by $(\gamma^2-\epsilon)\|d\|_2^2$ for all $T\in\N$.  This implies
that $e \in \ell_2$ and $\|e\|_2 < \gamma\|d\|_2$. 
\end{proof}
\vspace{0.1in}

\subsection{Numerical Implementation Using SDP Relaxation of  $\Mc$}
\label{sec:numimp}

This section focuses on the numerical issues associated with the complete set $\Mc$. This complete set is convex but checking copositivity is co-NP complete \cite{murty1987some}.  Hence we rely on a standard copositivity relaxation to formulate our stability/performance condition as an SDP. 


Let $\{D_1,\ldots,D_{2^{n_v}} \}$ denote the $2^{n_v}$ entries of $\Dpm^{n_v}$. Define the following convex optimization based on 
Theorem~\ref{thm:StabPerfConf}:
\begin{align}
   \label{eq:Optim1}
   \begin{split}       
   &  \min_{P=P^\top,\, M=M^\top, \, \gamma^2} \gamma^2  \\
   &  P\succeq 0, \,\,\, LMI(P,M,\gamma^2) \prec 0,  \\
   & \bmtx D_i\\ \frac{1}{2} (I+D_i) \emtx^\top 
      M
      \bmtx D_i \\ \frac{1}{2} (I+D_i) \emtx \in COP^{n_v},
      \,\, i=1,\ldots 2^{n_v}.
   \end{split}
\end{align}
If the optimization is feasible then $F_U(G,\Phi)$ is stable. Moreover, the optimization returns the tightest (smallest) upper bound on the $\ell_2$ gain (using a quadratic storage and the complete set of ReLU QCs). The set of copositive matrices is a closed, convex cone (Proposition 1.24 in \cite{berman2003completely}).  It follows that $\Mc$ is a convex cone and \eqref{eq:Optim1} is a convex optimization. 

However,  simply testing if a matrix is copositive is an co-NP complete problem \cite{murty1987some}.  Thus it is common to use relaxations for copositivity conditions. One simple sufficient condition is: if $S=S^\top$ is positive semidefinite and $N=N^\top$ is elementwise nonnegative then $S+N$ is copositive  (Remark 1.10 in \cite{berman2003completely}). This relaxation is exact for $n_v \le 4$~\cite{diananda1962non} but not for $n_v\ge 5$ (Example 1.30 in \cite{berman2003completely}).  We summarize this relaxation in the following comment:

\vspace{0.05in}
\textbf{Copositive Relaxation:} Let $F(X)$ denote a function mapping some variable $X$ to a matrix $F(X)\in\Sym^m$. We replace the matrix constraint $F(X) \in COP$ by the standard relaxation 
$F(X)-N\succeq 0$ where $N=N^\top\in \R^{m\times m}_{\ge 0}$.
\vspace{0.05in}


Lemma~\ref{lem:ExistingQCs} stated that $\Mc$ contains the existing sets of QCs for ReLU ($\Mset{1}$, $\Mset{2}$, and $\Mset{3}$).  The next lemma notes that these set containments still hold when the copositivity relaxation is used.

\vspace{0.1in}
\begin{lemma}
   \label{lem:ExistingQCsRelaxed}
   Let $\hat{\mathcal{M}}_2$ and $\hat{\mathcal{M}}_c$ denote the subsets of $\Mset{2}$ and $\Mc$ with the copositivity condition replaced by its relaxation.
   
   The sets $\Mset{1}$, $\hat{\mathcal{M}}_2$, and $\Mset{3}$
   are subsets of $\hat{\mathcal{M}}_c$.
\end{lemma}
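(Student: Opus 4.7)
The plan is to take an arbitrary element from each of the three sets and, for every $D\in \Dpm^{n_v}$, explicitly exhibit a decomposition $M_D = S + N$ with $S\succeq 0$ and $N$ elementwise nonnegative, thereby verifying the relaxed copositivity condition defining $\hat{\mathcal{M}}_c$. This is strictly stronger than what Lemma~\ref{lem:ExistingQCs} already establishes, so my approach is to revisit the computations from that lemma and check that the structures they produce happen to factor naturally into PSD + elementwise nonnegative pieces. Throughout, the projection matrices $P_+ := \frac{1}{2}(I+D)$ and $P_- := \frac{1}{2}(I-D)$, which are diagonal $0/1$ matrices satisfying $P_+P_- = 0$ and $D = P_+-P_-$, will do most of the work.

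For $\Mset{1}$ the calculation should collapse to nothing: for any $M_1 \in \Mset{1}$ with $Q_1\in D^{n_v}$, direct substitution and use of commutativity of diagonal matrices gives $M_{1,D} = 2Q_1 P_+(D-P_+) = -2Q_1 P_+P_- = 0$, so the trivial decomposition $0=0+0$ suffices. For $\hat{\mathcal{M}}_2$, write $M_2 = \bsmtx -I & I \\ 0 & I \esmtx^\top (S+N) \bsmtx -I & I \\ 0 & I \esmtx$ with $S\succeq 0$ and $N$ elementwise nonnegative, and compute $U_D := \bsmtx -I & I \\ 0 & I \esmtx \bsmtx D \\ P_+ \esmtx = \bsmtx P_- \\ P_+ \esmtx$. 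Since $U_D$ has nonnegative entries, $M_{2,D} = U_D^\top S U_D + U_D^\top N U_D$ is the required sum of a PSD matrix and an elementwise nonnegative matrix, so $\hat{\mathcal{M}}_2 \subseteq \hat{\mathcal{M}}_c$.

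The case $\Mset{3}$ is where a short algebraic identity is needed. Expanding $M_{3,D}$ with $D = P_+-P_-$ and cancelling the symmetric $P_+ Q_3 P_+$ and $P_+ Q_3^\top P_+$ contributions yields $M_{3,D} = -\left(P_+ Q_3 P_- + (P_+ Q_3 P_-)^\top\right)$. Because $P_+$ and $P_-$ have disjoint supports, the diagonal of this matrix vanishes, and the $(i,j)$-entry for $i\ne j$ equals $-(Q_3)_{ij}(P_+)_{ii}(P_-)_{jj} - (Q_3)_{ji}(P_-)_{ii}(P_+)_{jj}$, which is nonnegative because the off-diagonal entries of any $Q_3 \in DH^{n_v}$ are $\le 0$. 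Hence $M_{3,D}$ is itself elementwise nonnegative, and the trivial split $M_{3,D} = 0 + M_{3,D}$ satisfies the relaxation. There is no real obstacle in the proof; the only subtlety worth flagging is that it is precisely the doubly hyperdominance assumption on $Q_3$ (in particular, its off-diagonal sign pattern) that forces the off-diagonals of $M_{3,D}$ to be nonnegative, so no genuine copositivity of $M_{3,D}$ needs to be invoked.
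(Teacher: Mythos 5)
Your proposal is correct and follows essentially the same route as the paper: it revisits the computations of Lemma~\ref{lem:ExistingQCs}, observing that $M_{1,D}=0$, that $M_{3,D}$ is symmetric and elementwise nonnegative (hence $0+M_{3,D}$ is a valid split), and that substituting $Q_2=S+N$ into $M_{2,D}=U_D^\top Q_2 U_D$ with the nonnegative $0/1$ matrix $U_D=\bsmtx P_- \\ P_+\esmtx$ yields the required PSD-plus-nonnegative decomposition. All three computations check out.
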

\begin{proof}
    The proof of Lemma~\ref{lem:ExistingQCs}
    in Appendix~\ref{sec:ProofOfLemExistingQCs}
    actually shows that $\Mset{1}$ and 
    $\Mset{3}$ are subsets of $\hat{\mathcal{M}}_c$.
    
    Moreover, the proof that $\Mset{2}\subset \Mc$  in Appendix~\ref{sec:ProofOfLemExistingQCs}
    can be modified as follows to show that $\hat{\mathcal{M}}_2\subset 
    \hat{\mathcal{M}}_c$.  Consider any $M_2\in \hat{\mathcal{M}}_2$ so that
    \begin{align*}
    M_2 := \bmtx -I & I \\ 0 & I \emtx^\top 
         Q_2 \bmtx -I & I \\ 0 & I \emtx
    \mbox{ with }
    Q_2 = S +N,
    \end{align*}
    where $S$ is positive semidefinite and $N$ is elementwise non-negative.    We need to show that $M_{2,D}$ as defined in \eqref{eq:MD} satisfies the relaxed copositive condition for all $D\in \Dpm^{n_v}$.  Note that $M_{2,D}$ simplifies to:
    \begin{align*}
         M_{2,D} =\bmtx \frac{1}{2} (I-D) \\ \frac{1}{2} (I+D) \emtx^\top 
        Q_2
      \bmtx \frac{1}{2} (I-D) \\ \frac{1}{2} (I+D) \emtx.
    \end{align*}
    The matrices $I_+:=\frac{1}{2}(I+D)$ and  $I_-:=\frac{1}{2}(I-D)$  are diagonal with either $0$ or $1$ along the diagonals.
    Substitute $Q_2=S+N$ to show that $M_{2,D}$ is the sum of a positive semidefinite and elementwise nonnegative matrix. Thus $M_{2,D}$ satisfies the relaxed copositivity conditions for any $D\in \Dpm^{n_v}$ and hence
    $M_2 \in \hat{\mathcal{M}}_c$.
\end{proof}
\vspace{0.1in}

We can use the copositivity relaxation to reformulate \eqref{eq:Optim1} as a semidefinite program (SDP) \cite{boyd2004convex}:
\begin{align}
   \label{eq:Optim2}
   \begin{split}       
   &  \min_{P=P^\top,\, M=M^\top, \, \gamma^2, N_1,\ldots, N_{2^{n_v}}} \gamma^2  \\
   &  P\succeq 0, \,\,\, LMI(P,M,\gamma^2) \prec 0,  \\
   & \bmtx D_i\\ \frac{1}{2} (I+D_i) \emtx^\top 
      M
      \bmtx D_i \\ \frac{1}{2} (I+D_i) \emtx - N_i \succeq 0,
      \,\, i=1,\ldots 2^{n_v} \\
   & N_i =N_i^\top\in \R^{n_v}_{\ge 0},  
     \,\, i=1,\ldots 2^{n_v}. 
   \end{split}   
\end{align}
This SDP has $2^{n_v}$ positive semidefiniteness constraints arising from our relaxation of the copositivity constraints in $\Mc$. This is in addition to the elementwise nonnegativity constraints on $N_i$
for $i=1,\ldots,2^{n_v}$.  This limits the use of $\Mc$ to problems where $n_v$ is relatively small.  Larger values of $n_v$ will require the use of QCs that are subsets of $\Mc$.

\subsection{Numerical Implementation With Subsets of $\Mc$}

One useful subset of $\Mc$ consists of all combinations 
the existing QCs described in Section~\ref{sec:bgReLUQCs}.  These correspond to QCs defined by matrices in the following set:
 \begin{align}
\label{eq:M123}
    \Mset{123} &:=  \left\{ M_1+M_2 +M_3\, : \, 
    M_i\in\Mset{i}, \, i=1,2, 3 \right\}.
\end{align}
The set $\Mc$ is a convex cone (as noted above) and $\Mset{i}\subset \Mc$ ($i=1,2,3$) by Lemma~\ref{lem:ExistingQCs}. It follows that $\Mset{123} \subset\Mc$.

Recall that Theorem~\ref{thm:ReLUQCs2} states that only  ReLU and flipped ReLU satisfy all constraints in the complete set $\Mc$. It is interesting that the proof only uses the QCs defined by matrices in $\Mset{1}$ and $\Mset{2}$.  The proof does not require the use of Zames-Falb QCs defined by $\Mset{3}$. Define another subset $\Mc$ but without the Zames-Falb QCs in $\Mset{3}$:
\begin{align}
\label{eq:M12}
    \Mset{12} &:= \left\{ M_1+M_2 \, : \, 
    M_i\in\Mset{i}, \, i=1,2 \right\}.
\end{align}
The next result states that the Zames-Falb QCs defined by $\Mset{3}$ do not increase the class of QCs  when combined with $\Mset{1}$ and $\Mset{2}$.  

\vspace{0.1in}
\begin{theorem}
    \label{thm:ReLUQCs3}
    The sets $\Mset{12}$ in \eqref{eq:M12} 
    and $\Mset{123}$ in \eqref{eq:M123} 
    are equal, i.e. $\Mset{12}=\Mset{123}$.
\end{theorem}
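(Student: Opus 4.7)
The plan is to prove the non-trivial inclusion $\Mset{123} \subseteq \Mset{12}$ by establishing the stronger claim $\Mset{3} \subseteq \Mset{12}$; the reverse inclusion $\Mset{12} \subseteq \Mset{123}$ is immediate because $0 \in DH^{n_v}$ puts $0 \in \Mset{3}$. Once I have $\Mset{3}\subseteq \Mset{12}$, the identity $\Mset{12}=\Mset{123}$ follows by absorbing the extra $M_3$ summand: if $M_3 = M_1' + M_2'$ with $M_i' \in \Mset{i}$, then for any $M_1 \in \Mset{1}, M_2 \in \Mset{2}$,
\begin{align*}
M_1+M_2+M_3 = (M_1+M_1') + (M_2+M_2') \in \Mset{1}+\Mset{2} = \Mset{12},
\end{align*}
since both $\Mset{1}$ and $\Mset{2}$ are convex cones.

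To show $\Mset{3}\subseteq \Mset{12}$, I would take an arbitrary $M_3 \in \Mset{3}$ generated by $Q_3 \in DH^{n_v}$ and split $Q_3 = Q_d + Q_o$, where $Q_d \in D^{n_v}$ is the diagonal part of $Q_3$ and $Q_o$ is its off-diagonal part. The essential structural observation is that the doubly-hyperdominant hypothesis forces $Q_o$ to be entrywise non-positive; the row/column-sum conditions in the definition of $DH^{n_v}$ are not needed for the construction. I would then choose $Q_1 := Q_d$ as the generator of an element $M_1 \in \Mset{1}$, and
\begin{align*}
Q_2 := \bmtx 0 & -Q_o^\top \\ -Q_o & 0 \emtx
\end{align*}
as the generator of an element $M_2 \in \Mset{2}$. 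Since $Q_2$ is symmetric and entrywise non-negative, it is immediately copositive, so $Q_2 \in COP^{2n_v}$ as required.

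The remaining step is a direct block computation: expanding the congruence that defines $\Mset{2}$ gives
\begin{align*}
\bmtx -I & I \\ 0 & I \emtx^\top Q_2 \bmtx -I & I \\ 0 & I \emtx = \bmtx 0 & Q_o^\top \\ Q_o & -(Q_o+Q_o^\top) \emtx,
\end{align*}
and adding $M_1 = \bsmtx 0 & Q_d \\ Q_d & -2Q_d \esmtx$ reconstructs $M_3 = \bsmtx 0 & Q_3^\top \\ Q_3 & -(Q_3+Q_3^\top) \esmtx$, using $Q_d^\top = Q_d$ together with $Q_3 = Q_d + Q_o$ and $Q_3+Q_3^\top = 2Q_d + Q_o + Q_o^\top$. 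I do not anticipate a serious technical obstacle. The only conceptual content is identifying the correct diagonal/off-diagonal split, after which the result amounts to the observation that the off-diagonal interaction encoded by a Zames--Falb multiplier already lies inside the positivity/positive-complement cone $\Mset{2}$, while its diagonal part is absorbed by the complementarity cone $\Mset{1}$.
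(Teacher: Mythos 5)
Your proposal is correct and follows essentially the same route as the paper: the paper also splits $Q_3$ into its diagonal part (absorbed into $Q_1\in D^{n_v}$) and its entrywise non-positive off-diagonal part $-P$, and then augments $Q_2$ by the symmetric, entrywise nonnegative (hence copositive) matrix $\bsmtx 0 & P^\top \\ P & 0\esmtx$ — exactly your $-Q_o$ construction, just carried out inline on a general element $M_1+M_2+M_3$ rather than stated as the separate inclusion $\Mset{3}\subseteq\Mset{12}$ followed by cone-closure. Your block computation and the observation that the row/column-sum conditions of $DH^{n_v}$ are not needed are both accurate.
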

\vspace{0.05in}
\begin{proof}

\noindent
($\subseteq$) Consider any $M=M_1+M_2\in \Mset{12}$.
Then $M=M_1+M_2+M_3\in \Mset{123}$.
with  $M_3:=0 \in \Mset{3}$.

\vspace{0.1in}
\noindent
($\supseteq$) Consider any $M=M_1+M_2+M_3\in \Mset{12}$.
There exists $Q_1 \in D^{n_v}$, $Q_2 \in COP^{2n_v}$ and $Q_3 \in DH^{n_v}$ such that
\begin{align*}
    M= \bsmtx -I & I \\ 0 & I \esmtx^\top  Q_2 \bsmtx -I & I \\ 0 & I \esmtx + \bsmtx 0 & (Q_1+Q_3)^\top \\ (Q_1+Q_3) & -(2Q_1+Q_3+Q_3^\top) \esmtx.
\end{align*}
Decompose $Q_3=D-P$ where $D\in D^{n_v}$ with $D_{ii}=(Q_3)_{ii}$. All elements of $P=D-Q_3$ are nonnegative because $Q_3$ is  doubly hyperdominant. Next define
$\hat{Q}_1:=Q_1+D \in D^{n_v}$ and $\hat{Q}_2:= Q_2 + \bsmtx 0 & P^\top \\ P & 0 \esmtx$. The term $\bsmtx 0 & P^\top \\ P & 0 \esmtx$ is copositive as it is symmetric and elementwise nonnegative (Remark 1.10 of \cite{berman2003completely}). Hence $\hat{Q}_2$ is copositive.   Finally, define $\hat{M}_1\in \Mset{1}$ and $\hat{M}_2 \in \Mset{2}$ corresponding to $\hat{Q}_1$ and $\hat{Q}_2$, respectively. It can be verified directly that $M=\hat{M}_1+\hat{M}_2 \in \Mset{12}$.
\end{proof}
\vspace{0.1in}

The stability/performance condition in
Theorem~\ref{thm:StabPerfConf}  with $\Mset{12}$ leads to the following SDP:
\begin{align}
   \label{eq:Optim3}
   \begin{split}       
   &  \min_{P=P^\top,\, M=M^\top, \, Q_1, Q_2, N_2, \gamma^2} \hspace{-12mm} \gamma^2  \\
   &  P\succeq 0, \,\,\, LMI(P,M,\gamma^2) \prec 0,  \\
   & M= \bsmtx -I & I \\ 0 & I \esmtx^\top  Q_2 \bsmtx -I & I \\ 0 & I \esmtx + \bsmtx 0 & Q_1 \\ Q_1 & -2Q_1  \esmtx \\
   & Q_1 \in D^{n_v}, \, Q_2-N_2 \succeq 0, \,
   N_2=N_2^\top\in \R_{\ge 0}^{2n_v \times 2n_v}.
   \end{split}   
\end{align}
Here we have used the relaxation described above for the constraint $Q_2\in COP^{2n_v}$. Adding the Zames-Falb QCs defined by $\Mset{3}$ will not reduce the conservatism in this SDP.\footnote{One minor point is that Theorem~\ref{thm:ReLUQCs3} states that $\Mset{12}=\Mset{123}$ where $\Mset{2}$ has a copositivity condition.  The same set equality holds if we define $\Mset{2}$ using the relaxation for the copositivity condition as in SDP \eqref{eq:Optim3}.}
The SDP \eqref{eq:Optim3} will be more computationally efficient, but also more conservative, than the SDP \eqref{eq:Optim2} formulated using $\Mc$. In fact, we provide a numerical example in  Section~\ref{sec:ell2Ex}
where the use of $\Mc$ provides improved results compared to the use of $\Mset{12}$. This numerical example indicates that $\Mset{12}$ is a strict  subset of the complete set $\Mc$.  Thus $\Mc$ should be used when possible as it is the largest possible class of QCs for  ReLU.  

\section{Examples}
\label{sec:Examples}

This section provides two examples to illustrate the complete sets of QCs and incremental QCs for  ReLU.

\subsection{$\ell_2$ Bounds With QCs}
\label{sec:ell2Ex}

We consider the interconnection $F_U(G,\Phi)$ shown in
Figure~\ref{fig:LFTdiagram}  with a  ReLU $\Phi$ wrapped in feedback around the top channels of a nominal system $G$.  The nominal part $G$ is a discrete-time, LTI system \eqref{eq:LTInom} with the following data:
\begin{align*}
    A & := \bsmtx   
    4.1819 &  -4.1122 &   4.1810  & -3.4344 \\
    9.5280 &  -9.1573 &   8.4496  & -6.2574 \\
    8.6800 &  -7.3880 &   6.0327  & -4.0370 \\
    2.8000 &  -1.7500 &   1.2060  & -0.7209 
    \esmtx, \\
    B_1 & := \alpha \times \bsmtx
    9.528  &   8.68  &  5.60  &   2.00 \\
    17.36  &  11.20  &  4.00  &      0 \\
    11.20  &   4.00  &     0  &      0 \\
     2.00  &      0  &     0  &      0
    \esmtx, \,\,
    B_2 := \bsmtx 1 \\ 1 \\ 1 \\ 1\\ 1 \esmtx, \\
    C_1 & := \bsmtx
   -0.5000  &  0.4875 &  -0.2250 &  -0.0250 \\
   -0.4250  &  0.6500 &  -0.6155 &   0.3604 \\
    0.1100  &  0.1282 &  -0.3323 &   0.3064 \\
    0.5645  & -0.5248 &   0.2859 &  -0.0793 
    \esmtx, \,\,
    C_2 := \bsmtx 1 & 1 & 1 & 1 & 1\esmtx, \\
    D_{11} & := \alpha \times \bsmtx
          0  & 0 & 0 & 0 \\
         -1  & 0 & 0 & 0 \\
      -0.85  & -1 & 0 & 0 \\
       0.22  & -0.85 & -1 & 0 \esmtx, \,\,
     D_{12} := 0, \,   D_{21}:=0, \, D_{22}:=0.
\end{align*}
This system is a modification of Example 2 in \cite{carrasco19} using the lifting described in \cite{noori24ReLURNN}. The  ReLU $\Phi$ has dimension $n_v=4$ and $\alpha$ is a gain variation included on the $(v,w)$ channels associated with $\Phi$. If $\alpha=0$ then dynamics from $d$ to $e$ are governed by the "nominal" LTI system $G_0$ described by $(A,B_2,C_2,D_{22})$.  The induced $\ell_2$ gain in this case is equal to the $H_\infty$ norm: $\|G_0\|_\infty=39.8$.

We can use Theorem~\ref{thm:StabPerfConf} to compute an upper bound on the induced $\ell_2$ gain of $F_U(G,\Phi)$ when $\alpha\ge 0$.  We compute bounds using the sets of QCs described by $\Mc$ and $\Mset{12}$ and the corresponding SDPs   \eqref{eq:Optim2} and \eqref{eq:Optim3}, respectively. Figure~\ref{fig:RNNgainPlot} shows the two bounds for 20 linearly spaced values of $\alpha$ from $0$ to $0.6$.  Both curves agree with the nominal gain $\|G_0\|_\infty=39.8$ at $\alpha=0$. The complete set $\Mc$ provides a less conservative bound than the QCs 
defined by $\Mset{12}$. It took 17.5sec and 22.1 sec to compute all 20 points on the curves for $\Mset{12}$ and $\Mc$, respectively. Thus the computational costs are similar for this example although the computation with $\Mc$ to grow more rapidly with $n_v$. The curve with $\Mset{123}$ was also computed but is not shown since it is indistinguishable from the curve for $\Mset{12}$.  This is expected since Theorem~\ref{thm:ReLUQCs3} states $\Mset{12}=\Mset{123}$.

\begin{figure}[h!]
  \centering
  \includegraphics[width=0.42\textwidth]{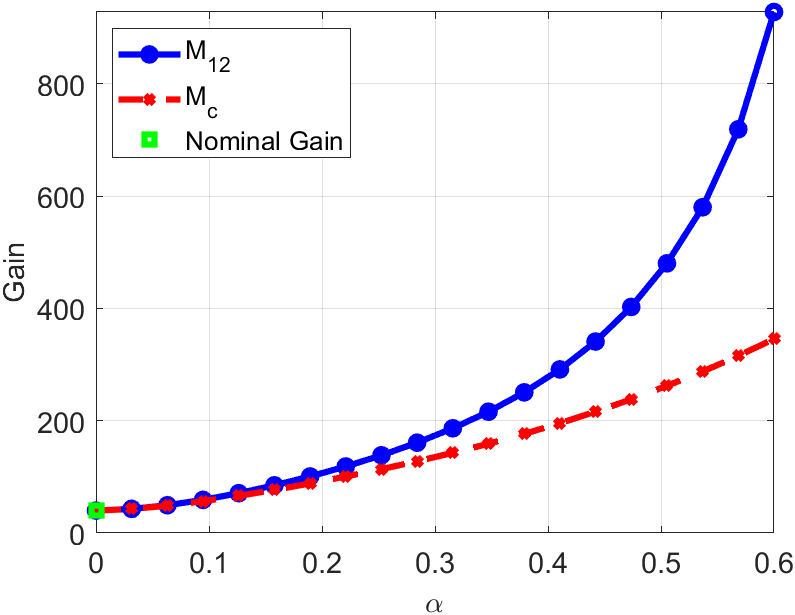}
  \caption{Bound on induced $\ell_2$ gain for system $F_U(G,\Phi)$ vs. $\alpha$ using  QCs defined by 
  $\Mc$ and $\Mset{12}$. We expect the interconnection to eventually become unstable as $\alpha$ increases which is consistent with both curves. The complete set $\Mc$ provides a less conservative (smaller) bound on the gain.  }
\label{fig:RNNgainPlot}
\end{figure}


The parameter $\alpha$ appears in $(B_1,D_{11})$ and thus it scales the effect of the ReLU as it feeds back into $G$.
We expect that the interconnection may become unstable as $\alpha$ increases which is consistent with both curves shown in Figure~\ref{fig:RNNgainPlot}. In fact, 
the SDP \eqref{eq:Optim3} with $\Mset{12}$ becomes infeasible for $\alpha \ge 0.699$.  The corresponding curve (blue solid) in Figure~\ref{fig:RNNgainPlot} becomes unbounded if we extend  the horizontal axis to $0.699$ and we cannot prove stability for larger values of $\alpha$. The SDP  \eqref{eq:Optim3} with the complete set $\Mc$ remains feasible up to $\alpha\approx 1.1016$ and becomes infeasible for larger values of $\alpha$. Thus with the complete set: (i) we are able to prove stability over a larger range of $\alpha$, and (ii) provide smaller (less) conservative bounds on the $\ell_2$ gain for values of $\alpha$ for which the system is stable.

\subsection{Lipschitz Bounds With Incremental QCs}
\label{sec:LipschitzEx}

This section provides a simple example to bound the Lipschitz constant of a simple Neural Network (NN).   We use the LipSDP condition \cite{fazlyab2019efficient} but combined with our complete set of 
incremental QCs for  ReLU.  Consider
an $\ell$-layer NN $f:\R^{n_0}\to\R^{n_{\ell+1}}$ defined by weights $W^k \in \R^{n_{k+1}\times n_k}$
and biases $b^k \in \R^{n_{k+1}}$. The NN maps $x \in R^{n_0}$ to $y=f(x)\in \R^{n_{\ell+1}}$ by:
\begin{align}
\begin{split}    
    x^0 & = x \\
    x^{k+1} & = \Phi^k( W^k x^k + b^k) \mbox{ for } k = 0,\ldots, \ell-1 \\
    y & = W^\ell x^\ell + b^\ell.
\end{split}
\end{align}
Here we consider the case where $k^{th}$ activation function $\Phi^k$ is a  ReLU of dimension $n_{k+1}$. $L$ is a Lipschitz bound (in the $2$-norm) for the NN i $\|f(x)-f(\hat{x})\|_2 \le L \|x-\hat{x}\|_2$ for all $x,\hat{x} \in \R^{n_0}$.   

We can stack 
the activation functions into a single
 ReLU $\Phi:\R^m\to \R^m$  with total dimension $m:=\sum_{k=1}^{\ell} n_k$.
The NN can be expressed compactly as
\begin{align}
B x & = \Phi(A x + b) \\
y  & = C x + b^\ell,
\end{align}
where\footnote{blkdiag($\cdots$) denotes the block diagonal augmentation of matrices.} 
\begin{align*}
    & x : = \bmtx x^0 \\ \vdots \\  x^\ell \emtx, 
    \,\,\,
    b : = \bmtx b^0 \\ \vdots \\ b^{\ell-1} \emtx, \\
    & A : = \bmtx \mbox{blkdiag}(W^0,\ldots,W^{\ell-1}) & 0_{m\times n_\ell} \emtx, 
    \,\,\,
    B : = \bmtx 0_{m\times n_0} & I_m \emtx, \\
    & C := \bmtx 0_{n_{\ell+1}\times (n_0+m-n_\ell)} & W^\ell \emtx,
    \,\,\,
    D = \bmtx I_{n_0} & 0_{n_0\times m} \emtx
\end{align*}

It follows from Theorem 2 in \cite{fazlyab2019efficient} that $L$ is a Lipschitz bound for $f$ if there exists
an incremental QC, defined by $M$, for $\Phi$ such that the following LMI holds:
\begin{align}
    LMI_{Lip}(M,L):=
\bmtx A \\ B \emtx^\top M \bmtx A \\ B \emtx
+ C^\top C -L^2\, D^\top D \preceq 0
\end{align}
We can find the smallest Lipschitz bound by searching over convex classes of incremental QCs. An SDP can be formulated to
minimize $L^2$ subject to 
$LMI_{Lip}(M,L)\preceq 0$ with  $\Miset{1}$  or the relaxed version of the complete set $\Mic$ using similar steps to those given in Section~\ref{sec:numimp}.

We used the SDP condition to compute Lipschitz bounds for a small two-layer NN with the following weights
\begin{align}
   & W^0 = \bmtx 
   -0.575 &   0.420 &   0.050 \\
   -0.730 &   0.200 &  -1.020 \emtx, \\
   & W^1 = \bmtx   1.120 &   -0.630 \emtx, 
   \,\,
   W^2 = \bmtx -0.700 \\ -1.300 \emtx.
\end{align}
The biases are not specified since they do not enter into the LMI condition. 

We obtain a Lipschitz bound $L = 1.2528$ when searching over $\Miset{1}$.  We also obtain $L=1.2528$ when searching over both $\Miset{1}$ and our new set $\Miset{2}$. Thus $\Miset{2}$ does not improve the bound on this specific example. However,  the bound is improved to $L = 1.1817$ when we use the complete set of incremental constraints $\Mic$. The times to compute the bounds with $\Miset{1}$, $\Miset{1}+\Miset{2}$, and $\Mic$ was 0.3sec, 0.3 sec, and 3.6sec, respectively. 

A lower bound on the Lipschitz constant was obtained by randomly sampling $N=10^7$ pairs of NN inputs $(x,\hat{x})$ from a zero-mean, unit variance Gaussian distribution and computing the corresponding NN outputs $(y,\hat{y})$. The largest value of the ratio 
$\frac{\|y-\hat{y}\|_2}{\|x-\hat{x}\|_2}$ over all the samples (assuming $x\ne\hat{x}$) is a lower bound on the Lipschitz constant. This sampled lower bound is $1.1817$. This matches the upper bound with $\Mic$ up to the reported digits.

This is an academic example illustrating that it is possible to improve upon the Lipschitz bounds obtained only using $\Miset{1}$. Importantly, the complete set $\Mic$ provides a unifying view that may aid in the search of additional useful subsets of incremental QCs.
More realistic NNs would have activation functions where the total dimension $m$ can be very large, requiring future study on possible combinations of our proposed method and existing SDP scaling techniques \cite{wang2024scalability}.


\section{Conclusions}

This paper derived a complete set of quadratic constraints (QCs) for the  ReLU.  The complete set of QCs is described by a collection of $2^{n_v}$ matrix copositivity conditions where $n_v$ is the dimension of the  ReLU. The relationship between our complete set and existing QCs has been carefully discussed. We also derived a similar complete set of incremental QCs for  ReLU.
We illustrate the use of the complete set of QCs to assess stability and performance for recurrent neural networks with ReLU activation functions.  We will study, as future work, the conservatism (if any) and the scalability of using the complete set of  ReLU QCs for stability/performance analysis.


\section{Acknowledgments}

The authors acknowledge AFOSR Grant \#FA9550-23-1-0732 for funding of this work. The authors also acknowledge Carsten Scherer for asking the question about the complete set of ReLU quadratic constraints.

\bibliographystyle{IEEEtran}
\bibliography{references} 

\appendix
\vspace{0.1in}

\subsection{Proof of Lemma~\ref{lem:ExistingQCs}}
\label{sec:ProofOfLemExistingQCs}

\begin{repeatlem}{lem:ExistingQCs}
The sets $\Mset{1}$, $\Mset{2}$, and $\Mset{3}$ are subsets of $\Mc$.
\end{repeatlem}
\begin{proof}
We need to show that any $M_i\in \Mset{i}$ ($i=1,2,3$) satisfies the condition:  $M_{i,D}$ is copositive for all $D\in \Dpm^{n_v}$.

First, consider any $M_1\in \Mset{1}$ so that $M_1 := \bsmtx 0 & Q_1 \\ Q_1 & -2Q_1 \esmtx$
for some  $Q_1 \in D^{n_v}$. If $D\in \Dpm^{n_v}$ then
\begin{align*}
     M_{1,D} & :=\bmtx D \\ \frac{1}{2} (I+D) \emtx^\top 
      M_1
      \bmtx D \\ \frac{1}{2} (I+D) \emtx \\
      & = \frac{1}{2} D Q_1 (I+D) + \frac{1}{2}  (I+D) Q_1 D
       - \frac{1}{2} (I+D) Q_1 (I+D).
\end{align*}
All the matrices in this expression are diagonal so this simplifies to $M_{1,D} = \frac{1}{2} Q_1 (D^2-I)$.  Finally, $D^2=I$ so that $M_{1,D}=0$. Thus $M_{1,D}$ is (trivially) copositive for any $D\in \Dpm^{n_v}$ and hence $M_1\in \Mc$.

Next, consider any $M_2\in \Mset{2}$ so that
\begin{align*}
M_2 := \bmtx -I & I \\ 0 & I \emtx^\top 
     Q_2 \bmtx -I & I \\ 0 & I \emtx
\mbox{ with }
Q_2 \in COP^{2n_v}.
\end{align*}
Again, define $M_{2,D}\in \R^{n_v\times n_v}$  for some $D\in \Dpm^{n_v}$.  Then $M_{2,D}$ simplifies to:
\begin{align*}
     M_{2,D} =\bmtx \frac{1}{2} (I-D) \\ \frac{1}{2} (I+D) \emtx^\top 
      Q_2
      \bmtx \frac{1}{2} (I-D) \\ \frac{1}{2} (I+D) \emtx.
\end{align*}
The matrices $I_+:=\frac{1}{2}(I+D)$ and  $I_-:=\frac{1}{2}(I-D)$  are diagonal with either $0$ or $1$ along the diagonals.
Therefore, 
\begin{align}
    \bar{v}^\top M_{2,D} \bar{v}
    = \bmtx I_- \bar{v} \\ I_+ \bar{v} \emtx^\top
    Q_2 
    \bmtx I_- \bar{v} \\ I_+ \bar{v} \emtx
    \ge 0 
    \hspace{0.2in} \forall \bar{v}\in\R_{\ge 0}^{n_v}.
\end{align}
The product is nonnegative because $Q_2$ is copositive and both $I_-\bar{v}$, $I_+\bar{v}$ are in the nonnegative orthant $\R^{n_v}_{\ge 0}$ when $\bar{v}\in\R_{\ge 0}^{n_v}$.
Thus $M_{2,D}$ is copositive
for any $D\in \Dpm^{n_v}$ and hence $M_2\in \Mc$.

Finally, consider any $M_3 \in \Mset{3}$ so that $M_3:= \bsmtx 0 & Q_3 \\ Q_3^\top & -(Q_3+Q_3^\top) \esmtx$ for
some $Q_3\in DH^{n_v}$. Again, define $M_{3,D}\in \R^{n_v\times n_v}$  for some $D\in \Dpm^{n_v}$. 
Then $M_{3,D}$ simplifies to:
\begin{align}
\label{eq:M3D}
\begin{split}    
     M_{3,D} & = -\frac{1}{4} (I-D) Q_3^\top (I+D) 
     - \frac{1}{4} (I+D) Q_3 (I-D) \\
         & = -I_- Q_3^\top I_+ - I_+ Q_3 I_- .
\end{split}
\end{align}
where $I_+:=\frac{1}{2}(I+D)$ and  $I_-:=\frac{1}{2}(I-D)$  
as above.  These diagonal matrices are complementary in the sense that they sum to the identity. We can assume $I_+ = \bsmtx I & 0 \\ 0 & 0 \esmtx$ and $I_- = \bsmtx 0 & 0 \\ 0 & I \esmtx$ by properly permuting the rows and columns. Partition $Q_3$ conformably with $I_+$ and $I_-$ so that the product in \eqref{eq:M3D} is:
\begin{align*}
Q_3 = \bmtx (Q_3)_{11} & (Q_3)_{12} \\ (Q_3)_{21} & (Q_3)_{22} \emtx
\Rightarrow
     M_{3,D} & = \bmtx 0 & -(Q_3)_{12} \\ -(Q_3)_{12}^\top & 0 \emtx.
\end{align*}
Every entry of the block $(Q_3)_{12}$ is non-positive because $Q_3$ is doubly hyperdominant.  Hence every entry of $M_{3,D}$ is non-negative, i.e. $M_{3,D} \in \R_{\ge 0}^{n_v \times n_v}$.  Symmetric matrices that are elementwise nonnegative are copositive (Remark 1.10 of \cite{berman2003completely}).
Thus $M_{3,D}$ is copositive
for any $D\in \Dpm^{n_v}$ and hence $M_3\in \Mc$.
\end{proof}
\vspace{0.1in}





\subsection{Proof of Lemma~\ref{lem:DHdecomp}
}

\label{sec:ProofOfDHdecomp}

\begin{repeatlem}{lem:DHdecomp}
Let $T_2=T_2^\top \in DH^{n_v}$ be given
with $\sum_{k=1}^{n_v} T_{ik}=0$ for $i=1,\ldots,n_v$.    Then there exists
$\{ \lambda_{ij} \}_{i,j=1}^{n_v} \in \R_{\ge 0}$ such that
\begin{align}
   \label{eq:T2decompApp}
   T_2 = \sum_{i,j=1}^{n_v} \lambda_{i,j} (e_i-e_j)(e_i-e_j)^\top.
\end{align}
\end{repeatlem}
\vspace{0.05in}
\begin{proof}
First, let $r:=\max_{ij} (T_2)_{ij}$. Define $R:=\frac{1}{r}(rI-T_2)$ so that $T_2 = r( I-R)$. $R$ is symmetric, has all non-negative entries, and its rows/columns sum to 1, i.e. $R$ is a symmetric doubly stochastic matrix.  Every such matrix
can be decomposed as $R=\sum_k \frac{\alpha_k}{2} (P_k + P_k^\top)$
with $\alpha_k\ge 0$, $\sum_k \alpha_k=1$, and $P_k$ are permutation matrices~\cite{katz1970extreme,cruse1975note}.
Thus $T_2$ can be decomposed as
$T_2 = \sum_k  \beta_k (2I - P_k - P_k^\top)$ with $\beta_k = \frac{r\alpha_k}{2} \ge 0$.

Next, each permutation can be expressed as $P_k = \sum_{i=1}^{n_v} e_i e_{\pi_k(i)}^\top$ where $\pi_k$ is a permutation function that maps $\{1,\ldots,n_v\}$ one-to-one onto $\{1,\ldots,n_v\}$. Thus each term $2I-P_k-P_k^\top$ can be further decomposed as:
\begin{align*}
2I-P_k-P_k^\top 
& = \sum_{i=1}^{n_v} 2e_ie_i^\top - e_i e_{\pi_k(i)}^\top
-e_{\pi_k(i)} e_i^\top \\
& = \sum_{i=1}^{n_v} (e_i-e_{\pi_k(i)}) (e_i-e_{\pi_k(i)})^\top.  
\end{align*}
Substitute this into 
$T_2 = \sum_k  \beta_k (2I - P_k - P_k^\top)$ 
to get a decomposition of the form shown in
\eqref{eq:T2decompApp}.
\end{proof}
\vspace{0.1in}

\end{document}